\def\BibTeX{{\rm B\kern-.05em{\sc i\kern-.025em b}\kern-.08em
    T\kern-.1667em\lower.7ex\hbox{E}\kern-.125emX}}
\newtheorem{definition}{Definition} 
\newtheorem{lemma}{Lemma}     
\newtheorem{myEx}{Example}
\begin{document}

\title{\textsc{Moon}: A Modality Conversion-based Efficient Multivariate Time Series Anomaly Detection}

\author{Yuanyuan Yao,
        Yuhan Shi,
        Lu Chen,
        Ziquan Fang, \\
        Yunjun Gao,~\IEEEmembership{Senior Member,~IEEE}, 
        Leong Hou U,
        Yushuai Li,~\IEEEmembership{Senior Member,~IEEE},
        Tianyi Li
    \IEEEcompsocitemizethanks{
        \IEEEcompsocthanksitem{This paper was produced by the IEEE Publication Technology Group. They are in Piscataway, NJ.}
        \IEEEcompsocthanksitem Y. Yao, Y. Shi, L. Chen (Corresponding Author), Z. Fang and Y. Gao are with the College of Computer Science, Zhejiang University, Hangzhou 310027, China, E-mail:\{yoyoyao, shiyuhan, luchen, zqfang, gaoyj\}@zju.edu.cn.
        \IEEEcompsocthanksitem Leong Hou U is with the Department of Computer and Information Science, University of Macau, Macau, E-mail:ryanlhu@um.edu.mo. 
        \IEEEcompsocthanksitem Y. Li and T. Li are with the Department of Computer Science, Aalborg University, Denmark, E-mail:\{yusli, tianyi\}@cs.aau.dk.
    }
}

\markboth{IEEE TRANSACIYONS ON KNOWLEDGE AND DATA ENGINEERING, VOL. XX NO. XX. XXX XXXX}%
{Shell \MakeLowercase{\textit{et al.}}: A Modality Conversion-based Multivariate Time Series Anomaly Detection}


\maketitle

\begin{abstract}
Multivariate time series (MTS) anomaly detection identifies abnormal patterns where each timestamp contains multiple variables.  Existing MTS anomaly detection methods fall into three categories: reconstruction-based, prediction-based, and classifier-based methods. However, these methods face two key challenges: {(1) Unsupervised learning methods, such as reconstruction-based and prediction-based methods, rely on error thresholds, which can lead to inaccuracies; (2) Semi-supervised methods mainly model normal data and often underuse anomaly labels, limiting detection of subtle anomalies;
(3) Supervised learning methods, such as classifier-based approaches, often fail to capture local relationships, incur high computational costs, and are constrained by the scarcity of labeled data.} To address these limitations, we propose \textsc{Moon}, a supervised \underline{m}odality c\underline{o}nversion-based multivariate time series an\underline{o}maly detectio\underline{n} framework. \textsc{Moon} enhances the efficiency and accuracy of anomaly detection while providing detailed anomaly analysis reports. First, \textsc{Moon} introduces a novel multivariate Markov Transition Field (MV-MTF) technique to convert numeric time series data into image representations, capturing relationships across variables and timestamps. Since numeric data retains unique patterns that cannot be fully captured by image conversion alone, \textsc{Moon} employs a Multimodal-CNN to integrate numeric and image data through a feature fusion model with parameter sharing, enhancing training efficiency. Finally, a SHAP-based anomaly explainer identifies key variables contributing to anomalies, improving interpretability. Extensive experiments on six real-world MTS datasets demonstrate that \textsc{Moon} outperforms six state-of-the-art methods by up to 93\% in efficiency, 4\% in accuracy and, 10.8\% in interpretation performance.
\end{abstract}

\begin{IEEEkeywords}
multivariate time series, anomaly detection, interpretable system
\end{IEEEkeywords}

\section{Introduction}
\IEEEPARstart{M}ultivariate time series anomaly detection identifies unusual patterns or behaviors across multiple variables over time. It benefits a wide range of real-life applications, including finance~\cite{finance}, healthcare~\cite{healthcare}, and industrial monitoring~\cite{industry,pinSQL}, where timely detection of anomalies can lead to significant improvements in decision-making and framework reliability~\cite{survey-ts,survey-unsupervised}. In recent years, deep learning techniques have been widely applied to time series anomaly detection and achieved superior performance.  

{In multivariate time series anomaly detection, the scarcity of labeled anomalies and high annotation costs pose major challenges. To address these, unsupervised methods are widely used~\cite{OmniAnomaly, tranAD, USAD, MAD-GAN, CATCH, GDN, LGMAD}, modeling normal behavior and detecting deviations via prediction or reconstruction errors. However, unsupervised methods typically rely on large amounts of continuous and stable normal data as the basis for modeling. In real-world applications, such as in financial or industrial systems, data often fluctuates due to external shocks, making it difficult to obtain stable normal sequences. Moreover, the definition of ``normal" is ambiguous and often requires manual labeling, further limiting the adaptability of these methods. Semi-supervised methods attempt to combine a small amount of anomalous labels with unlabeled data, improving performance while controlling label dependence. However, they still rely heavily on large amounts of stable normal data during the modeling process, thus, their effectiveness is limited under unstable environments. Additionally, the limited number of anomalous samples restricts their ability to discern complex patterns. In contrast, supervised methods, while exhibiting strong discriminative performance when sample labels are available~\cite{surpervised,TSec,svm}, are highly dependent on labels. In scenarios with scarce samples, they are prone to overfitting, leading to insufficient generalization capability.}


The interpretability of anomaly detection methods is also crucial for understanding the root causes of anomalies. {Existing methods~\cite{DAEMON, OmniAnomaly, tranAD, ANDEA, ECGGAN,ad-fine-tune,tensor} often rely on differences between reconstructed and ground-truth data to identify variables causing anomalies.} However, they treat all models as black boxes, ignoring unique structures and offering \underline{\textit{low interpretability}}. Thus, we aim to develop an \underline{\textit{accurate}}, \underline{\textit{efficient}}, and \underline{\textit{interpretable}} anomaly detection framework.

\textbf{Accurate anomaly detection.} {To achieve accurate and robust anomaly detection under limited supervision, it is crucial to effectively leverage scarce anomaly labels, enhancing the model's ability to recognize and model abnormal patterns with minimal annotation cost.} Multimodal data improves performance by combining complementary information from different sources, thereby enhancing the overall result quality. However, collecting multimodal data is often costly and impractical, while numeric time series data is readily available. 

\begin{figure}[t]
      \centering
      \includegraphics[width=0.7\linewidth]{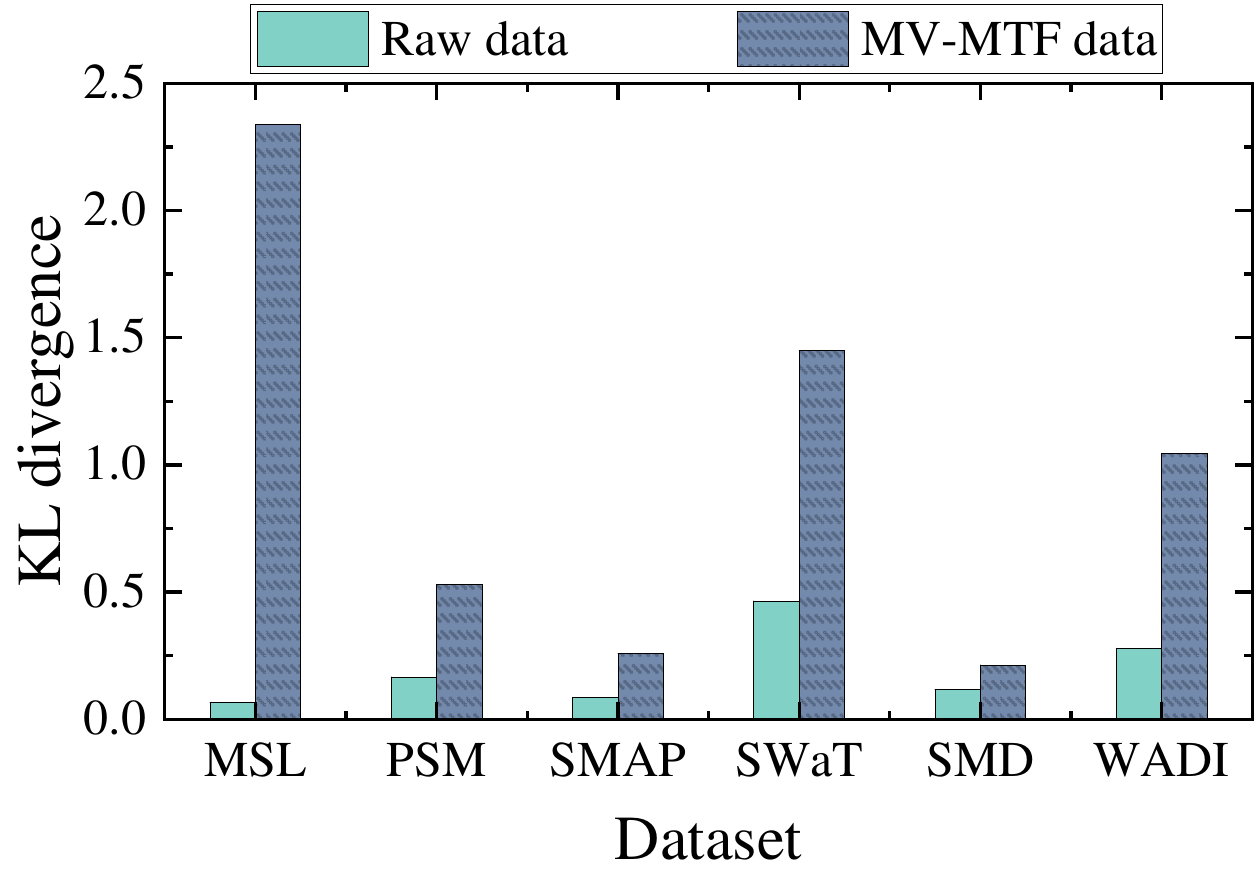}
      \vspace{-4mm}
      \caption{Comparison study on the KL divergence} 
      \label{fig.KL}
      \vspace{-5mm}
\end{figure}

{To address this, we propose Multivariate Markov Transition Field (MV-MTF), which explicitly encodes variable-to-variable and time-to-time transitions in a 2D format to enhance anomaly representation by capturing structural dynamics that are difficult to extract from raw sequences. As shown in Fig.~\ref{fig.KL}, KL divergence analysis shows that MV-MTF enlarges the distribution gap between normal and abnormal data, indicating that its multivariate joint encoding converts cross-variable anomalies into explicit texture patterns, thereby enhancing feature distinguishability. However, MV-MTF may lose fine-grained numerical details that are crucial for identifying point anomalies or small deviations. To compensate for this, we integrate MV-MTF with raw numerical features using a shared-parameter multimodal CNN. This architecture allows the model to jointly learn global structural patterns and local numerical variations~\cite{CNN-CV,CNN-CV-classification}. The multimodal-CNN employs convolutional kernels with varying receptive fields to capture multi-scale features from both the image-like representations and the raw time series. These kernels effectively extract information across variable and timestamp dimensions, enabling robust multimodal feature fusion.} To further enhance the model's ability to process multimodal data, we incorporate a Multimodal Attention mechanism within Multimodal-CNN. This mechanism identifies and focuses on the most relevant features across different modalities, enabling the model to prioritize key information effectively. By integrating this attention mechanism, Multimodal-CNN achieves greater sensitivity and accuracy in detecting anomalous patterns.

\begin{table}[!t]
    \centering \scriptsize
    \setlength{\tabcolsep}{3mm}
    \caption{Modal conversion time (seconds) for MTS} \vspace{-3mm}  
    \begin{tabular}{|c|c|c|c|c|c|}
    \hline
        Datasize   & 4000 & 6000 & 8000 & 10000 \\ \hline
        Extended MTF  & 5428.81 & 12176.4 & 21752.41 & 34862.44 \\ \hline
        MV-MTF & 4.44 & 6.11 & 7.65 & 9.25 \\ \hline
    \end{tabular}
    \vspace{-5mm}
    \label{case_study}
\end{table}

 \textbf{Efficient anomaly detection.}  Efficient modal conversion directly contributes to efficient anomaly detection by reducing the computational overhead associated with processing large datasets. Markov Transition Field (MTF) is an effective modal conversion method that transforms time-series data into image representations for analysis. It captures local temporal information, representing short-term dependencies between consecutive or nearby data points~\cite{MTF15, MTF23}. However, existing MTF methods primarily focus on univariate data~\cite{MTF15, MTF23} and are computationally intensive, as they calculate transitions between every pair of data points. A straightforward extension of MTF to MTS is calculating transitions across different variables and timestamps. As shown in Table~\ref{case_study}, the extended MTF incurs very high processing times, with 10,000 data points requiring nearly 10 hours to process. Moreover, combining data from both modalities may increase the training cost.

To address these limitations, we propose an optimized MV-MTF strategy to improve efficiency of modal conversion, we simplify the time dimension. Since the influence between variables decreases  with temporal distance, we consider only the impact of values from the previous time step on the current time. This reduces computational complexity from \( O(n^2) \) to \( O(n) \), significantly accelerating MV-MTF while preserving essential local information. As shown in Table~\ref{case_study}, with 10,000 data points, MV-MTF processes 10,000 data points in just 9.25 seconds, a 99.97\% reduction compared to Extended MTF. Furthermore, in the multimodal-CNN module, we adopt a parameter-sharing strategy to reduce the overall number of trainable parameters, which not only decreases the memory footprint but also accelerates model convergence. This contributes to a more efficient training process and significantly shortens the overall training time.


\textbf{Interpretable anomaly detection.} {Interpretability is one of the core objectives of our anomaly detection framework. For multivariate time series detected as anomalous, the anomaly typically involves only a subset of variables. Accordingly, solving the anomalies requires not only determining whether an anomaly has occurred but also precisely localizing the affected variables.} Existing threshold-based interpretability methods~\cite{OmniAnomaly, tranAD} face two main limitations in anomaly detection. First, error score threshold-based methods risk misjudgment or omission. Second, they lack intuitive, model-specific explanations, limiting their interpretability.

To address these issues, we propose a high-interpretability method leveraging multimodal information. Gradient Shapley Additive Explanations (SHAP)~\cite{SHAP} values are used to quantify the impact of other variables on a target variable and assess the current variable’s contribution to anomaly occurrence. This enables a multi-dimensional evaluation of anomalies, clarifying how various factors contribute to the detection process.

Next, we apply a weighted ranking method to prioritize variables based on their contributions to anomaly detection, generating a ranked list of the most likely causes. To ensure reliability, we introduce an evaluation module that validates top-ranked variables, reducing false positives and enhancing the robustness of the explanations. Finally, we categorize anomalies into specific types based on expert insights. By clustering the data and constructing classifiers, we identify patterns that refine anomaly categorization. A comprehensive anomaly report is then generated, as exemplified in {Fig.}~\ref{figurereport}.

\begin{figure}[tb]
      \centering
      \includegraphics[width=0.65\linewidth]{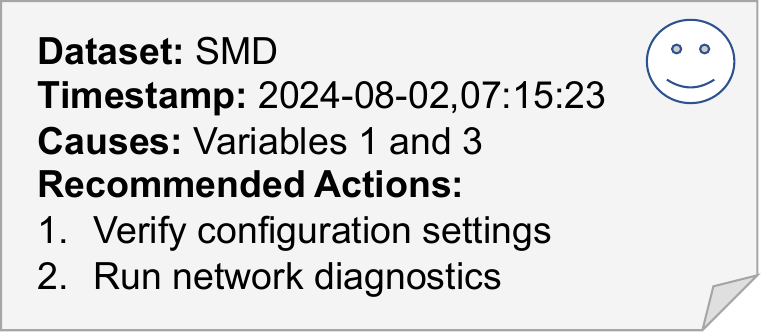}
      \vspace{-3mm}
      \caption{An example of an anomaly report} 
      \label{figurereport}
      \vspace{-3mm}
\end{figure}

The report highlights key information about the anomaly, including (i) the dataset (SMD), (ii) the timestamp (2024-08-02, 07:15:23), (iii) the contributing variables (Variables 1 and 3), and (iv) actionable recommendations (verifying configuration settings and running network diagnostics). The report helps  users quickly understand and address anomalies and enhances framework interpretability and reliability.

We integrate the above novel techniques to propose \textsc{Moon}, a \underline{\textbf{m}}odality c\underline{\textbf{o}}nversion-based an\underline{\textbf{o}}maly detectio\underline{\textbf{n}} framework for MTS that is accurate, efficient, and interpretable. First, \textsc{Moon} employs the Multimodal-CNN to represent and fuse multimodal data while training a classifier for accurate anomaly detection. Next, it efficiently converts numeric MTS data into image data using a new MV-MTF technique. Finally, {\textsc{Moon} integrates an interpretability module to refine anomaly categories and generate detailed anomaly analysis reports.}

To sum up, we have made the following key contributions:

\begin{itemize}
[topsep=0pt,itemsep=0pt,parsep=0pt,partopsep=0pt,leftmargin=*]
\item {We present \textsc{Moon}, a {m}odality c{o}nversion-based MTS an{o}maly detection framework, that efficiently and effectively supports MTS anomaly detection.} 

\item We propose a supervised classification Multimodal-CNN, which integrates numerical MTS data and image representations using convolution kernels with varying receptive fields and a Multimodal Attention mechanism. It multi-scale features and prioritizes key information, enhancing anomaly detection accuracy.

\item We propose a MV-MTF technique designed to simultaneously capture transitions across both time and variables. By leveraging the strong temporal dependencies in time-series data, we further simplify the calculation of transitions between variables, accelerating the modal conversion.

\item We present a SHAP-based anomaly explainer that integrates multimodal data to compute SHAP values and incorporates a reliable evaluation module to enhance accuracy. By categorizing anomalies into distinct types, it significantly improves anomaly detection interpretability.

\item We conduct experiments on six real-world datasets. The results demonstrate that  \textsc{Moon} outperforms six state-of-the-art methods  by up to 93\% in efficiency, 4\% in accuracy and, 10.8\% in interpretation performance.


\end{itemize}

The rest of this paper is organized as follows. We provide the preliminaries in Section 2. Section 3 provides our framework overview and main components of our framework. Section 4 presents the experimental results. We review related work in Section 5, and conclude the paper in Section 6.
\section{Preliminaries}

\subsection{Problem Definition}
\begin{definition}[\textbf{Multivariate time series}] \label{def1}
A multivariate time series $X = (x_1, x_2, \ldots, x_n)$ is a sequence of numerical observations ordered by time, where $x_t$ ($1 \leq t \leq n$) is a $c$-dimensional vector ($c > 1$) representing the observation at timestamp $t$, and $n$ is the length of the time series. The dimension of $X$ is $n \times c$. For the $v^{\textit{th}}$ variable ($1 \leq v \leq c$), $X^v$ represents its univariate time series, and $x^v_t$ denotes its value at timestamp $t$.
\end{definition}

\begin{definition}[\textbf{Anomaly detection}] 
Given a training time series $X$, anomaly detection predicts $Y = \{y_t\}_{t=1}^{\hat{n}}$ for any unseen test time series $\hat{X}$ of length $\hat{n}$ with the same modality as $X$. $y_t \in \{0, 1\}$ indicates whether the data point at timestamp $t$ of $\hat{X}$ is anomalous ($y_t = 1$ denotes anomalous points).
\end{definition}

\begin{definition} [\textbf{Anomaly interpretablity}]
    Given an anomalous time series data point $\hat{x}_t$ at timestamp $t$ with $c$ variables, anomaly interpretability identifies the variables that contributed to classifying $\hat{x}_t$ as anomalous.
\end{definition}

\begin{myEx}
Consider a multivariate time series $X = (x_1, x_2, x_3)$, where each $x_t\,(1 \leq t \leq 3)$ has three variables: temperature, pressure, and humidity. The observations are $x_1 = [30, 101325, 20]$, $x_2 = [32, 101300, 21]$, and $x_3 = [50, 101310, 20]$. Hence, $X^1 = [30, 32, 50]$, $X^2 = [101325, 101300, 101300]$, and $X^3 = [20, 21, 20]$. Anomaly detection identifies $x_3$ as an anomalous data point ($y_3 = 1$). Anomaly interpretability determines that the temperature value $x_3^1 = 50$ is the primary contributor to the anomaly, while pressure $x_3^2 = 101310$ and humidity $x_3^3 = 20$ remain normal.

\end{myEx}

\subsection{Markov Transition Field Technology (MTF)}
MTF techniques~\cite{MTF15, MTF23} are primarily designed for converting univariate time series data to image data.   For a univariate time series $X = (x_1, \ldots, x_n)$ with a single variable ($c = 1$), data points are first discretized by mapping continuous values to discrete bins, reducing computational cost.  Let \( Q \) denote the number of bins used to partition the data range. After discretization, each data point \( x_t \) is assigned to a bin with identifier \( q_i \) (\( i \in [1, Q] \)). A \( Q \times Q \) state transition matrix \( W \) is then constructed, where \( W_{\textit{ij}} \) represents the transition probability from bin \( q_i \) to bin \( q_j \) ($1 \le i, j \le Q$). 


\begin{equation}
    W=\left[\begin{array}{cccc}
w_{11} \mid P_{11} & \cdots & w_{1 Q} \mid P_{1Q} \\
w_{21} \mid P_{21} & \cdots & w_{2 Q} \mid P_{2Q} \\
\vdots &  & \vdots \\
w_{Q 1} \mid P_{Q1} & \cdots & w_{Q Q} \mid P_{QQ}
\end{array}\right],
\end{equation}

\noindent
where \( w_{ij} \) denotes the transition probability in \( P_{ij} \) from \( q_j \) to \( q_i \), with \( P_{ij} = P(x_t \in q_i \mid x_{t-1} \in q_j) \), and \( t \) denotes any given time.

\begin{myEx}\label{ex:univariate}
Given three bins $q_1 = [0, 0.2)$, $q_2 = [0.2, 0.3)$, $q_3 = [0.3, 0.4)$, and a univariate time series $X = (0.1, 0.3,$  $0.05, 0.4, 0.15, 0.2)$, the data points are discretized as follows: $x_1$, $x_3$, and $x_5$ are classified into $q_1$, $x_2$ and $x_4$ into $q_3$, and $x_6$ into $q_2$. The resulting converted time series is $(1, 3, 1, 3, 1, 2)$. For the state transition matrix, $w_{{13}} = 2/3$, where 2 represents the two occurrences of the consecutive pair $(1, 3)$ in the converted series, and 3 is the total occurrences of pairs starting with $1$ ($1 \leq i \leq 3$).
\end{myEx}

Based on the obtained state transition matrix $W$, an \( n \times n \) Markov Transition Field (MTF) matrix \(M \) is constructed to capture the transition probabilities between states (i.e., timestamps), where \( m_{{ij}} \)  represents the transition probability of state \( i \) to the state \( j \), and its value equals to $w_{{ab}}$ (i.e., $x_i \in q_a$, and $x_j \in q_b$). Note that, the obtained MTF matrix $M$ is typically viewed as an image.

\begin{myEx}
Continuing Example~\ref{ex:univariate}, $m_{12} = w_{13}$ due to $x_1 \in q_1$ and $x_2 \in q_3$, $m_{23} = w_{31}$ due to $x_2 \in q_3$ and $x_3 \in q_1$, and $m_{34} = w_{13}$ due to $x_3 \in q_1$ and $x_4 \in q_3$.
\end{myEx}




\subsection{Shapley Additive Explanations (SHAP)}

SHAP~\cite{SHAP} is a unified framework for interpreting the output of machine learning models, grounded in cooperative game theory. The theoretical foundation of SHAP is based on the Shapley value, which provides a fair distribution of payoffs to players depending on their contribution to the total payoff in a cooperative game. In machine learning, SHAP values assign an importance value to each feature, representing its contribution to the model’s output. Given a model $f$ and an instance $x$, the SHAP value $\phi_i(x)$ for feature $i$ is computed as:
\begin{equation}
\phi_i(x) = \sum_{S \subseteq F \setminus \{i\}} \frac{|S|! (|F| - |S| - 1)!}{|F|!} \left[ f_x(S \cup \{i\}) - f_x(S) \right],
\end{equation}
where $F$ is the set of all features, $S$ is a subset of $F$ excluding feature $i$, $|S|$ is the number of features in subset $S$, and $f_x(S)$ is the model output for the subset $S$ with the instance $x$.

The SHAP value computation ensures that the sum of SHAP values for all features equals the difference between the model output and the expected output. If two features contribute equally to all subsets, they receive equal SHAP values. 


\section{Framework Overview}
\begin{figure*}[ht]
      \centering
      \includegraphics[width=0.93\linewidth]{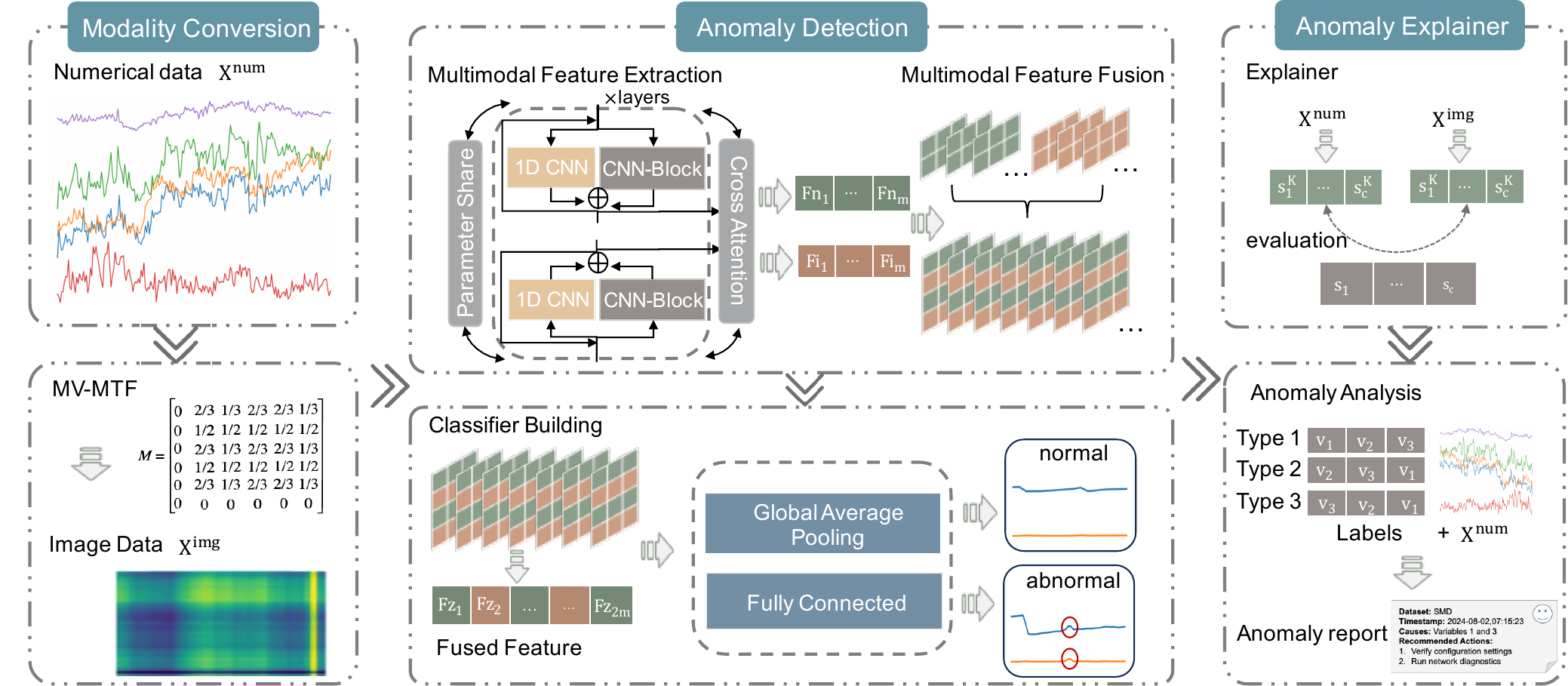}
      \caption{Framework overview} 
      \label{figuresystem}
      \vspace{-4mm}
\end{figure*}
{Fig.}~\ref{figuresystem} illustrates an overview of \textsc{Moon}, comprising modal conversion, anomaly detection, and anomaly explainer.
\begin{itemize}[topsep=1pt,itemsep=0pt,parsep=0pt,partopsep=0pt,leftmargin=*]
\item \textbf{Modal conversion.} To capture the local information between multivariate time series, we use the proposed MV-MTF technique to efficiently calculate the transition probabilities from other variables to a specific variable, thereby converting the multivariate time series into an image.

\item \textbf{Anomaly detection.} To support two different data modalities (i.e., numerical time series data and image data), we propose Multimodal-CNN that utilizes a parameter sharing mechanism to cross-extract features from two modalities, significantly reducing computational complexity and parameter count. In addition, a multi-modal attention mechanism and a separable convolution-based feature fusion technique are introduced to combine the features from both modalities, thereby improving the classifier's performance.
 
\item \textbf{Anomaly explainer.} To provide a user-friendly anomaly explainer, we generate an anomaly detection report for each detected anomly. Specifically, for the anomalous time series data points, we use the kernel explainer and gradient explainer to generate SHAP values based on numerical and image time series data, respectively. Next, we use these weighted SHAP values to identify the variables causing the anomalies. In addition, we train an explainer classifier to refine the anomaly categories.

\end{itemize} 

The online anomaly detection process includes the following three steps. (i) MTS data is converted into image data using  MV-MTF. (ii) Both the original MTS data and the converted image data are simultaneously fed into a Multimodal-CNN to determine if the data is anomalous. (iii) For data identified as anomalous, it is processed by an interpretable classifier to classify the anomaly type and generate a detailed anomaly detection report. 

\section{Framework design}

\subsection{Modality Conversion}
To enable efficient multivariate time series conversion, we introduce MV-MTF technology, which generates an image capturing the time dependency relationships between variables, as shown in the left part of {Fig.}~\ref{figuresystem}.

Let $X^u = \{x^u_1, x^u_2, \ldots, x^u_n\}$ represent the time series for variable $u$ ($u \in [1, c]$), mapped into $Q_u$ distinct bins. Similarly, let $X^{u'} = \{x^{u'}_1, x^{u'}_2, \ldots, x^{u'}_n\}$ represent the time series for another variable $u'$ mapped into $Q_{u'}$ distinct bins. {To determine an appropriate bin count $Q_u$ for each variable $u$, we search over a candidate set of bin counts $Q$. For each $Q_u \in Q$, we apply quantile-based binning to discretize $X^u$ into $Q_u$ bins. The resulting discretized sequence $Q^u$ is then evaluated using entropy:}
\begin{equation}
   {H(Q^u) = -\sum_{i=0}^{|Q_u|-1} p_i \log p_i,\quad p_i = \frac{1}{n} \sum_{t=1}^{n} I(Q_t^u = i)} 
\end{equation}
\noindent
{where $I(\cdot)$ is an indicator function that equals 1 if the $t$-th sample falls into the $i$-th bin, and 0 otherwise. We then select the optimal bin count that maximizes the entropy: $Q_u^* = \arg\max_{Q_u \in Q} H(Q^u)$.}

{With the optimal bin count determined for each variable, we proceed to analyze temporal transitions between discretized states, in contrast to hierarchical binning~\cite{hierarchical-context}, which first partitions the data by layers or groups before discretization.} For two consecutive timestamps, data points $x^{u}_{t-1}$ and $x^{u'}_t$ from variables $u$ and $u'$ are classified into bins $q^{u}_{i}$ ($i \in [1, Q_u]$) and $q^{u'}_{j}$ ($j \in [1, Q_{u'}]$), respectively. A $Q_u \times Q_{u'}$ state transition matrix $W$ is then calculated as follows.

\begin{equation}
   W = \left[ w_{ij} \mid P\left(x_t^{u'} \in q_i^{u'} \mid x_{t-1}^{u} \in q_j^{u}\right) \right],
\end{equation}

\noindent where $1\leq i \leq Q_u$, $1\leq j \leq Q_{u'}$, and $w_{i,j}$ is the transition probability $P( x_t^{u'} \in q^{{u'}}_{j}| x_{t-1}^u \in q^{u}_{i})$ from $q^{u}_{i}$ in variable $X^{u}$ to $q^{{u'}}_{j}$ in variable $X^{v}$.

Based on the obtained state transition matrix $W$, we compute the $n \times n$ Markov Transition Field matrix $M$. For any data point $x^{u}_i$ of variable $u$ at timestamp $i$ and data point $x^{u'}_j$ of variable $u'$ at timestamp $j$ ($x^{u}_i$ is classified into $q^u_a$, while $x^{u'}_j$ is classified to $q^{u'}_b$), the value $m_{ij}$ in $M$ equals to the state transition probability $w_{ab}$ in $W$, which represents the probability of $x^{u'}_t$ at time $t$ belonging to $q^{u'}_{b}$ given that the data in $x_{t-1}^{u}$ at time $t-1$ belonging to $q^{u}_{a}$. This gives us the Markov Transition Field matrix $M_{u,u'}$ for variables $u$ and $u'$.

\begin{myEx}
Given two time series for variables \( u \) and \( u' \): \( {X}^u = (0.1, 0.3, 0.05, 0.4, 0.15, 0.2) \) and \( {X}^{u'} = (0.2, 0.3, 0.45, 0.3, 0.35, 0.4) \), with their respective bins defined as \( Q^u: q^u_1 = [0, 0.2), q^u_2 = [0.2, 0.3), q^u_3 = [0.3, 0.4) \) and \( Q^{u'}: q^{u'}_1 = [0.2, 0.3), q^{u'}_2 = [0.3, 0.4), q^{u'}_3 = [0.4, 0.5) \).

The sequence \( {X}^u \) is classified as follows:
\( x^u_1 \), \( x^u_3 \), and \( x^u_5 \) are classified into \( q^u_1 \);
\( x^u_2 \) and \( x^u_4 \) are classified into \( q^u_3 \);
while \( x^u_6 \) is classified into \( q^u_2 \). For the sequence \( {X}^{u'} \): \( x^{u'}_1 \) is classified into \( q^{u'}_1 \);
\( x^{u'}_2 \), \( x^{u'}_4 \), and \( x^{u'}_5 \) are classified into \( q^{u'}_2 \);
while \( x^{u'}_3 \) and \( x^{u'}_6 \) are classified into \( q^{u'}_3 \). Thus, we obtain the converted time series  \(X_q^u = (1, 3, 1, 3, 1, 2)\) and \(X_q^{u'} =(1, 2, 3, 2, 2, 3)\) respectively. The transition probability \( w_{12} = \frac{2}{3} \), where 2 indicates two occurrences of the pair \((1, 2)\) in consecutive time steps from \( q^u_{t-1} \) to \( q^{u'}_t \), and 3 represents the total occurrences of pairs starting from \( q^u_1 \) (\( 1 \leq i \leq 3 \)). The corresponding matrices \( W \) and \( M \) are shown below.

\begin{gather*}
W = \begin{bmatrix}
0 & 2/3 & 1/3 \\
0 & 0   & 0   \\
0 & 1/2 & 1/2
\end{bmatrix} \\[1em]
M = \begin{bmatrix}
0 & 2/3 & 1/3 & 2/3 & 2/3 & 1/3 \\
0 & 1/2 & 1/2 & 1/2 & 1/2 & 1/2 \\
0 & 2/3 & 1/3 & 2/3 & 2/3 & 1/3 \\
0 & 1/2 & 1/2 & 1/2 & 1/2 & 1/2 \\
0 & 2/3 & 1/3 & 2/3 & 2/3 & 1/3 \\
0 & 0   & 0   & 0   & 0   & 0
\end{bmatrix}
\end{gather*}

\noindent According to \(W\), the probability \( \frac{2}{3} \) from \( q^u_1 \) to \( q^{u'}_2 \) is assigned to \( m_{12} \), since \( x^u_1 \in q^u_1 \) and \( x^{u'}_2 \in q^{u'}_2 \).

\end{myEx}

Note that, we have more than two variables in a multivariate time series data, thus, for a specific variable $u$, the Markov Transition Field matrices $M_{u, k}$ between variable $u$ and all other variables $k$ ($1 \le k \le c$ and $k \neq u$) are computed. These matrices are then averaged to obtain a new Markov Random Field matrix $M_{u, \cdot}$, as shown in Equation~\ref{Equation 6}.
\begin{equation}
    M_{u,\cdot}=\sum_{k=1, k\ne u}^{c} \omega_{u,k} \times M_{u,k},
    \label{Equation 6}
\end{equation}
where $\omega_{u,k}$ is the weight parameter for $M_{u,k}$, denoting the influence degree of $k$ on $u$. The previously computed MTF matrix $M_{u,u}$ for variable $u$ (Equation 2) is weighted and summed with $M_{u,\cdot}$ to obtain the Multivariate Markov Transition Field matrix $M_{u}$ for variable $u$, as shown in Equation~\ref{Equation 7},
\begin{equation}
    M_{u}=\alpha \times M_{u, u} + (1- \alpha)\times M_{u,\cdot},
     \label{Equation 7}
\end{equation}
where $\alpha$ is the weight parameter to control the influence degree of the same variable on the transition probability.

For a multivariate time series data $X$ with $c$ variables, we obtain the MV-MTF $M_X$ for the multivariate time series data $X$ by summing the MTF matrices of all variables, as shown in Equation~\ref{Equation 8}.
\begin{equation}
    M_{X} =\sum_{u=1}^{c} M_{u},
     \label{Equation 8}
\end{equation}
where $M_{u}$ is the MV-MTF matrices for all variables $u$ ($u \in [1, c]$).

 \textbf{Time complexity analysis.} The computational complexity of each MTF matrix computation is $O(n^2)$. In particular, the MTF matrix $M$ requires calculating the values for all pairs in $X^{u}$ and $X^{u'}$, resulting in very high complexity.

However, for time series data, the temporal relevance is crucial, and calculating the relationship between two timestamps time points do not make sense. Therefore, we optimize $M$ as:

\begin{equation}
    M_{i j}=\left\{\begin{array}{ll}
m_{i, j}, & \text { if } j=i+1 \\
0, & \text{ otherwise }
\end{array}\right.,
\label{Equation 9}
\end{equation}

Each element in $M$ represents the transition relationship between two consecutive timestamps. In this way, the complexity of Markov Transition Field matrix computation is reduced from $O(n^2)$ to $O(n)$.


\subsection{Anomaly Detection}

We present Multimodal-CNN with two key components: multimodal feature extraction and multimodal feature fusion.

\begin{figure*}[t]
      \centering
   \includegraphics[width=0.9\linewidth]{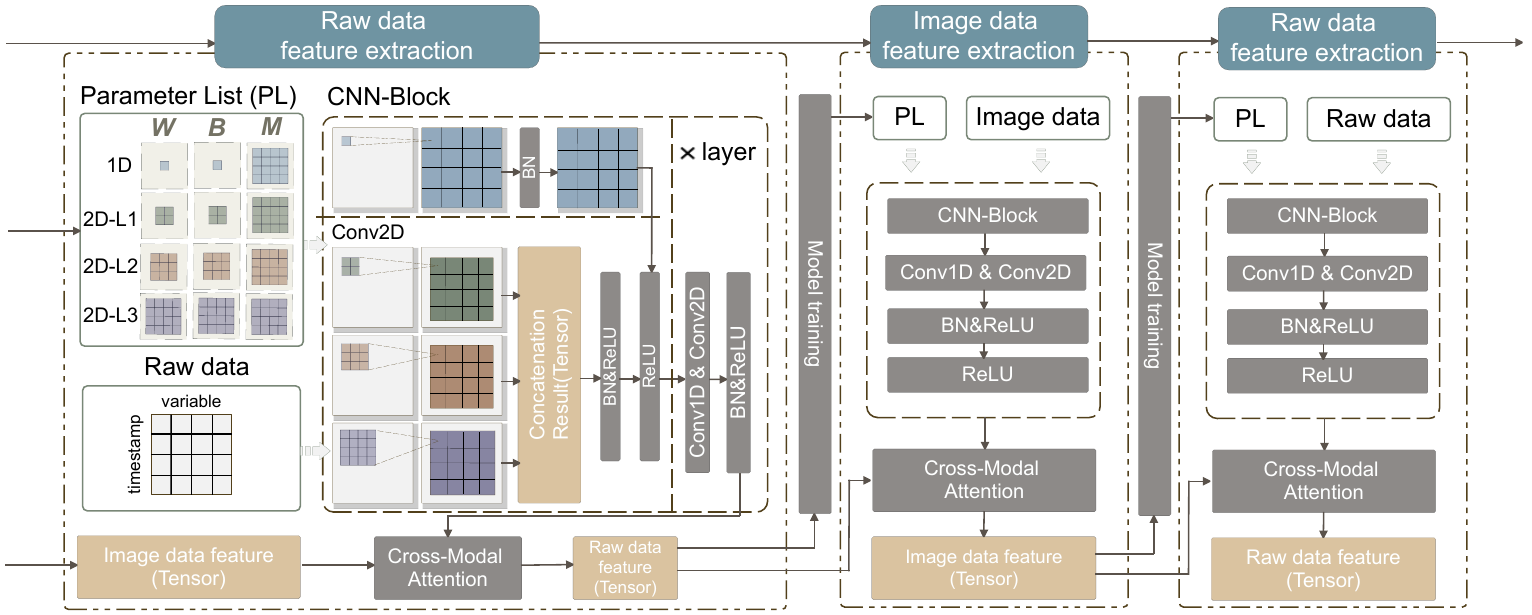}
      \caption{Multimodal feature extraction} 
      \label{figureCNNblock}
      \vspace{-3mm}
\end{figure*}

\subsubsection{Multimodal feature extraction}

This component incorporates a unified framework for extracting features from both numerical and image data, as illustrated in {Fig.}~\ref{figureCNNblock}. It begins by initializing parameters, with a focus on designing the receptive field for the CNN-Block. Using this design, the Multimodal-CNN extracts multi-scale features, capturing critical information through a multimodal attention mechanism. Moreover, parameter sharing is employed across modalities, further enhancing feature extraction performance. The detailed process is described below.

{\underline{\textit{Receptive field design.}} In the multimodal feature extraction module, determining the receptive field size of the 1D-CNN is crucial, as it directly influences the range of features captured during convolution. To handle features at different scales effectively, the receptive field design must be both flexible and efficient. Using convolution kernels of even sizes enables coverage of receptive fields at various scales~\cite{OmiScale-CNN}.

The receptive field (denoted as \(\textit{RF}\)) is calculated as \( \textit{RF} = p(1) + p(2) + p(3) - 2 \). For example, if the kernel size of the first convolutional layer \( p_1 \) is 3 and the second layer \( p_2 \) is 5, the receptive field after a 1D convolution is \(\textit{RF} = 3 + 5 + 1 - 2 = 7 \). Similarly, after a 2D convolution, the receptive field becomes \(\textit{RF} = 3 + 5 + 2 - 2 = 8 \). Extending this concept, ensuring that \( p(1) + p(2) \) covers all even numbers smaller than \( n \) enables the network to achieve receptive fields of all sizes less than \( n \).

To maximize coverage, the network uses sets of prime numbers smaller than \( n \) as convolution kernel sizes for each layer. This  allows processing of receptive fields at different scales, avoiding limitations in local feature extraction and enhancing the network’s ability to capture multi-scale information. In the final layer, two convolution kernels—\( 1 \times 1 \) and \( 2 \times 2 \)—are employed to optimize receptive field coverage. The \( 1 \times 1 \) kernel captures features at the smallest scale, while the \( 2 \times 2 \) kernel captures slightly larger-scale features. This ensures complete coverage of all receptive field sizes, preventing information loss due to incomplete receptive fields and enhancing the network's robustness in feature extraction.



\underline{\textit{CNN-Block and Multimodal attention.} }
Based on the defined parameter list, different convolution kernel sizes are determined for the model. Each convolution layer is responsible for processing different temporal windows, allowing the model to extract information across multiple time scales as shown in Equation~\ref{8}.

\begin{equation}
    \begin{aligned}
        F_{p} &= \texttt{Conv}_{k_{p}\times k_{p}}(X), 
        \\
       F_{\text{concat}} &= \texttt{Concat}(F_{1}, F_{2}, \ldots, F_{p}),
    \end{aligned}
    \label{8}
\end{equation}

\noindent \noindent where \( k_{p} \) is the \( p^{\textit{th}} \) kernel size in each layer, and \( \texttt{Conv}_{k_{p}\times k_{p}}(X) \) denotes the convolution operation applied to the input \( X \) with a kernel size of \( k_{p} \times k_{p} \). This operation extracts local patterns and features within a specific temporal window defined by \( k_{p} \). \texttt{Concat} is used to concatenate the features with different kernel sizes along the channel dimension, forming a consolidated feature representation \( F_{\text{concat}} \).

To preserve information in deeper networks, multimodal feature extraction adopts a residual connection design. Through skip connections, the input data is directly added to the output, ensuring original information is passed to subsequent layers and mitigating the gradient diminishing problem. In each layer, the input passes through both the main network and a $1 \times 1$ convolution layer, which adjusts the channel dimensions to match the extracted features. The residual connection output, $\texttt{Conv}_{1 \times 1}(X)$, is the transformed input, while the combined output, $F_{\text{output}}$, is the sum of the concatenated features $F_{\text{concat}}$ and the residual connection $\texttt{Conv}_{1 \times 1}(X)$, computed as follows.  

\begin{equation}
     F_{\text{output}} = F_{\text{concat}}  \oplus  \texttt{Conv}_{1 \times 1}(X)
     \label{9}
\end{equation}

To enable each input type to focus on relevant features from the other, we calculate cross-type attention weights. For the numerical input, we compute the query $Q_{\text{num}}$, key $K_{\text{num}}$, and value $V_{\text{num}}$ as follows: 
$Q_{\text{num}} = F_{\text{num}} \textbf{W}^{Q}$, $ K_{\text{num}} = F_{\text{num}} \textbf{W}^{K}$, and $V_{\text{num}} = F_{\text{num}} \textbf{W}^{V}$.  
For the image input, we compute the query $Q_{\text{img}}$, key $K_{\text{img}}$, and value $V_{\text{img}}$ as follows:  $ Q_{\text{img}} = F_{\text{img}} \textbf{W}^{Q}$, $\quad K_{\text{img}} = F_{\text{img}} \textbf{W}^{K}$, and $V_{\text{img}} = F_{\text{img}}\textbf{W}^{V}$.  Here, \( \mathbf{W}^{Q} \), \( \mathbf{W}^{K} \), and \( \mathbf{W}^{V} \) are weight matrices learned during training. Next, we compute attention weights to let the numerical data attend to the image data, extracting relevant features:

\begin{equation}
    \begin{aligned}
      \texttt{Attention}_{\text{num} \rightarrow \text{img}} &= \texttt{softmax}\left(\frac{Q_{\text{num}} K_{\text{img}}^\mathrm{T}}{\sqrt{d}}\right), 
       \\
       F_{\text{num} \rightarrow \text{img}} &= \texttt{Attention}_{\text{num} \rightarrow \text{img}} \oplus V_{\text{img}},
    \end{aligned}
    \label{10}
\end{equation}

\noindent where \( Q_{\text{num}} \) is multiplied by \( K_{\text{img}}^\mathrm{T} \) (transpose of \(K_{\text{img}}\)) to compute similarity scores, scaled by \( \sqrt{d} \), where \( d \) is the feature dimension. The result is passed through a \texttt{softmax} function to calculate attention weights \( \texttt{Attention}_{\text{num} \rightarrow \text{img}} \). These weights are then applied to \( V_{\text{img}} \) to produce updated features \( F_{\text{num} \rightarrow \text{img}} \).

Finally, we reverse the process to let the image data attend to the numerical data:

\begin{equation}
    \begin{aligned}
        \texttt{Attention}_{\text{img} \rightarrow \text{num}} &= \texttt{softmax}\left(\frac{Q_{\text{img}} K_{\text{num}}^T}{\sqrt{d}}\right),
        \\
        F_{\text{img} \rightarrow \text{num}} &= \texttt{Attention}_{\text{img} \rightarrow \text{num}} \oplus V_{\text{num}},
    \end{aligned}
    \label{11}
\end{equation}

\noindent where \( Q_{\text{img}} \) is multiplied by \( K_{\text{num}}^\mathrm{T} \) (transpose of \( K_{\text{num}} \)) to compute similarity scores, scaled by \( \sqrt{d} \). The \texttt{softmax} function converts the scores into attention weights, \( \texttt{Attention}_{\text{img} \rightarrow \text{num}} \), which are applied to \( V_{\text{num}} \)  to produce updated features \( F_{\text{img} \rightarrow \text{num}} \). The above process allows each input type to focus on complementary information from the other, enhancing feature representation.

\underline{\textit{Parameter sharing.} } Parameter sharing across different data types reduces the number of parameters and improves training efficiency. The shared parameters include the weights \( \mathbf{W} \), biases \( \mathbf{B} \), and masks \( \mathbf{M} \) for each convolution kernel in the CNN-Block. During each epoch, the model trains sequentially on numerical data and image data, initializing parameters from the previous training iteration. The process is defined as follows:

\vspace{-2mm}
\begin{equation}
\vspace{-1mm}
    \begin{aligned}
     N_b &= f_{b} \left(\mathbf{X}_{n}, (\mathbf{W}_{b}, \mathbf{B}_{b}, \mathbf{M}_{b}) \right) \\
     &\leftarrow f_{b-1}\left(\mathbf{X}_{i}, (\mathbf{W}_{b-1}, \mathbf{B}_{b-1}, \mathbf{M}_{b-1}) \right),
    \\
     I_b &= f_{b}\left(\mathbf{X}_{i}, (\mathbf{W}_{b}, \mathbf{B}_{b}, \mathbf{M}_{b}) \right),
    \end{aligned}
\end{equation}

\noindent where \( \mathbf{X}_n \) and \( \mathbf{X}_i \) are the numerical and image data, respectively; \( b \) denotes the batch index in an epoch; \( \mathbf{W} \), \( \mathbf{B} \), and \( \mathbf{M} \) are the shared weight, bias, and mask parameters of the CNN-Block; \( N_b \) and \( I_b \) are the embeddings generated for the numerical and image data, respectively; and \( f_b \) represents the above multimodal feature extraction process. For each batch \( b \), the numerical data \( \mathbf{X}_n \) is processed first using the parameters \( \mathbf{W}_{b} \), \( \mathbf{B}_{b} \), and \( \mathbf{M}_{b} \) obtained from the image data in the previous batch \( b-1 \). The image data \( \mathbf{X}_i \) is then processed using the parameters updated after training on \( \mathbf{X}_n \) in the current batch. This sequential training  captures correlations between numerical and image data, enhancing the model’s ability to learn effectively from multimodal inputs.

{Parameter sharing not only significantly accelerates gradient convergence but also offers multiple additional benefits. Theoretically, consider gradients from two modalities denoted as \(\mathbf{g}_n\) and \(\mathbf{g}_i\). The gradient update for the shared parameters is a weighted linear combination:}
\begin{equation}
    {\Delta \theta_s = \lambda \mathbf{g}_n + (1-\lambda) \mathbf{g}_i,}
\end{equation}
{where $\lambda \in [0,1]$ is a weighting factor that controls the relative contribution of the two gradients to the final parameter update. The magnitude $\|\Delta \theta_s\|$ denotes the overall step size of the update, determined by the direction and scale of the combined gradients, and can be calculated as: $\|\Delta \theta_s\| = $}
\begin{equation}
   {\sqrt{
\lambda^2 \|\mathbf{g}_n\|^2 + (1-\lambda)^2 \|\mathbf{g}_i\|^2 + 2 \lambda (1-\lambda) \|\mathbf{g}_n\| \|\mathbf{g}_i\| \cos\theta
},} 
\end{equation}
{where $ \cos\theta = \frac{\mathbf{g}_n \cdot \mathbf{g}_i}{\|\mathbf{g}_n\| \|\mathbf{g}_i\|},$ measures the alignment between the two gradient directions. When \(\cos\theta\) approaches to 1 (indicating high directional similarity), the combined gradient magnitude is maximized, resulting in more effective updates and significantly faster convergence.}

{This mechanism enables parameter sharing to promote information fusion between different modalities, allowing the model to learn more generalizable and robust feature representations, thereby improving generalization and noise resilience. Furthermore, the weighted combination of gradients inherently suppresses noise: when one modality’s gradient is noisy, the other can partially offset the disturbance, enhancing training stability and model robustness. Additionally, by reusing the same parameter set, parameter sharing drastically reduces the total number of model parameters, lowering computational and storage costs, while simplifying the model architecture and training process, thus improving overall the training efficiency. }

\subsubsection{Multimodal Feature Fusion} It integrates features from multiple modalities to enhance representation for classification tasks. Multimodal feature fusion includes five key steps: (i) concatenation, (ii) depthwise separable convolution, (iii) layer normalization, and (iv) gated Feedforward Network.

\underline{\textit{Concatenation.}} The features extracted from different modalities are concatenated to form a combined feature vector $F_{\text{concat}} = [F_\text{img}; F_\text{num}]$.  

\underline{\textit{Depthwise separable convolution.}} The concatenated feature vector \( F_{\text{concat}} \) is processed using a depthwise separable convolution. This encompasses two sequential steps: (i) a depthwise convolution extracts spatial or temporal relationships within individual feature channels, and (ii) a point wise convolution integrates information across all channels, enhancing feature interaction and representation. The resulting output \(F_{\text{out}}\) is computed as follows:
\begin{equation}
    F_{\text{out}} = \texttt{PointwiseConv}\left( \texttt{DepthwiseConv}(F_{\text{concat}}) \right) 
\end{equation}

Splitting standard convolution into these two steps significantly reduces computational complexity by minimizing cross-channel operations, while preserving the richness of extracted features through efficient intra-channel pattern extraction and cross-channel integration.

\underline{\textit{Layer normalization.}} Layer normalization stabilizes and accelerates training by normalizing the input across features for each data sample, resulting in \( F_{\text{norm}} \).

\underline{\textit{Gated Feedforward Network.}} {The Gated Feedforward Network (GDFN) achieves refined fusion of multimodal features through dynamic channel weight. Given the input feature $F_{\text{norm}}$, it is first projected into a higher-dimensional space through a pointwise (1×1) convolution. A subsequent depthwise convolution is applied to capture intra-channel dependencies, producing intermediate features $F_{\text{v}}$. The feature $F_{\text{v}}$ is then split along the channel dimension into two parts, $F_{\text{v}_1}$ and $F_{\text{v}_2}$. A gating mechanism is applied as follows: \( F_{w} = \text{GELU}(F_{\text{v}_1}) \odot F_{\text{v}_2}\), where $\odot$ denotes element-wise multiplication. This operation adaptively reweights the feature channels, suppressing modality-specific redundancy while emphasizing informative signals. Finally, a 1×1 convolution reduces dimensionality: \( \mathbf{Z} = \text{Conv}_{1\times1}(F_{w}) \). This process enables GDFN to perform fine-grained modeling and learnable weighting of multimodal features, effectively enhancing the discriminative power and robustness of the representation.}

The fused vector \( Z \) is then passed to the classifier for anomaly detection. The classifier begins with global average pooling to compute the global averages of the input features, reducing their dimensionality and generating a compact representation. This low-dimensional representation is further processed by a fully connected layer, which produces the final anomaly detection results.

\begin{algorithm}[tb]
\LinesNumbered
\caption{Multimodal-CNN model}
\KwIn{Raw data $\textit{input}^n$, MV\mbox{-}MTF data $\textit{input}^i$, parameter list $\textit{parameter\_list}$; layer weights $W$, biases $B$, masks $M$; number of layers $\textit{layers}$; number of epochs $\textit{epoch\_num}$}
\KwOut{Feature extraction model $\textit{model}$}

\ForEach{$i \in \mathrm{range}(\textit{layers})$}{
  $using\_relu \leftarrow (i == \textit{layers}-1)$\;
  $layer \leftarrow \texttt{Build\_Layer}(layer\_parameters, using\_relu)$\;
  add $layer$ to $layer\_list$\;
}
// initialize $\textit{CNN\mbox{-}Block}$ using $layer\_list$\;

\For{$epoch \gets 1$ \KwTo $\textit{epoch\_num}$}{
  \For{$j \gets 1$ \KwTo $\textit{batch\_num}$}{
    \ForEach{$\text{CNN} \in \textit{CNN\mbox{-}Block}$}{
      $\textit{output}^n \leftarrow \texttt{CNN}(\textit{input}^n, (W,B,M))$\;
      $\textit{output}^{n\prime} \leftarrow \texttt{ReLU}\!\big(\texttt{add}(\textit{input}^n, \textit{output}^n)\big)$\;
      $\textit{input}^n \leftarrow \textit{output}^n$\;
    }
    $\textit{F}_{\mathrm{num}} \leftarrow \texttt{FullConnect}(\texttt{Pooling}(\textit{output}^{n\prime}))$\;

    \For{$t \gets 1$ \KwTo $\textit{CNN\mbox{-}Block}\_{\mathrm{num}}$}{
      $\textit{output}^i \leftarrow \texttt{CNN}(\textit{input}^i, (W,B,M))$\;
      $\textit{output}^{i\prime} \leftarrow \texttt{ReLU}\!\big(\texttt{add}(\textit{input}^i, \textit{output}^i)\big)$\;
      $\textit{input}^i \leftarrow \textit{output}^i$\;
    }
    $\textit{F}_{\mathrm{img}} \leftarrow \texttt{FullConnect}(\texttt{Pooling}(\textit{output}^{i\prime}))$\;

    // initialize $Q$, $K$, $V$ for multimodal attention\;
    $\textit{F}_{\mathrm{num}\to \mathrm{img}} \leftarrow \texttt{Attention}_{\mathrm{num}\to \mathrm{img}} \oplus \textit{V}_{\mathrm{img}}$\;
    $\textit{F}_{\mathrm{img}\to \mathrm{num}} \leftarrow \texttt{Attention}_{\mathrm{img}\to \mathrm{num}} \oplus \textit{V}_{\mathrm{num}}$\;
    $\textit{F}_{\mathrm{concat}} \leftarrow \texttt{Fusion}(\textit{F}_{\mathrm{num}\to \mathrm{img}}, \textit{F}_{\mathrm{img}\to \mathrm{num}})$\;

    \texttt{Adam}()\;
  }
}
\Return{$model$}
\end{algorithm}

\subsubsection{Model construction and training}

{Algorithm 1 outlines the construction and training process of the Multimodal-CNN. }

{\underline{\textit{Construction.}} {The algorithm begins by defining input parameters, including the original time series data (denoted as $\textit{input}^n$), image time series data (denoted as $\textit{input}^i$), model parameters (denoted as $\textit{parameter\_list}$), the weight, bias and mask of each layer (denoted as $W$, $B$, and $M$), the number of layers (denoted as $\textit{layers}$), and the number of training epochs (denoted as $\textit{epoch\_num}$)}.}

{Next, convolutional layers are constructed iteratively. For each layer, the algorithm checks if it is the final layer and determines whether to apply the \texttt{ReLU} activation function accordingly (lines 2–3). The \texttt{Build\_Layer} function is then used to create the current layer, which is appended to the \textit{layer\_list}. Finally, all layers are combined into a sequential CNN model named \textit{CNN-Block} (lines 4–5).}

{\underline{\textit{Training.}} During the training phase, the algorithm iterates through each training epoch. For each batch, it processes the original time series data using the convolutional blocks. The output \( \textit{output}^{n} \) from each block is added to the original input, followed by applying the \texttt{ReLU} function to generate a new input \( \textit{output}^{n} \) (lines 7--11).}

{Second, the algorithm applies a fully connected layer and pooling operation to \( \textit{output}^{n} \), producing the feature \( \textit{F}_{\text{num}} \) (line 12). Similarly, the image time series data is processed, generating \( \textit{output}^{i} \), which passes through a fully connected layer and pooling operation to derive the feature \( \textit{F}_{\text{img}} \) (lines 14--16).}

{Next, {We initialize the parameters \( Q \), \( K \), and \( V \) for the multimodal attention mechanism, which is then employed to capture and fuse inter-modal relationships, producing updated features \( \textit{F}_{\text{num}} \) and \( \textit{F}_{\text{img}} \) (lines 18--21)}. These features are fused into a combined feature \( \textit{F}_{\text{concat}} \), and the model is optimized using the Adam optimizer~\cite{adam} with a learning rate adjustment function (lines 22--23).}

{Finally, the algorithm returns the trained MultiModal Feature Extraction model \textit{model}. This enables effective extraction and fusion of features from multiple modalities, enhancing the model's overall performance.}

\subsection{Anomaly Explainer}
The explainer interprets the results of anomaly detection models, as illustrated in the lower right corner of {Fig.}~\ref{figuresystem}. It includes the Kernel Explainer and Gradient Explainer~\cite{SHAP}, which are used to interpret numerical and image data, respectively. Kernel methods and gradient methods are employed to generate SHAP values (\( s^K \) and \( s^G \)), which evaluate each variable's contribution to anomaly detection. \( s^K \) represents the direct impact of the current variable's raw data on anomaly detection results, while \( s^G \) captures the influence of the current variable's MV-MTF data on the current variable. During anomaly evaluation, \( s^K \) and \( s^G \) are combined to identify the variables that significantly contribute to anomalies. For the \( v^{\textit{th}} \) variable, its SHAP value \( s_v \) is computed as follows:
\begin{equation}
     s_{v} = \omega \times s_{v}^{K} + (1 - \omega) \times \frac{\sum_{j=1}^{c} s_{j}^{K} \times s_{v}^{G}}{c},
     \label{Equation 14}
\end{equation}

\noindent where \( c \) is the total number of variables, \( \omega \) is a weight parameter balancing the contributions of \( s^K \) and \( s^G \), \( s_{v}^{K} \) is the SHAP value reflecting the direct impact of the \( v^{\textit{th}} \) variable, and \( s_{v}^{G} \) is the impact of other variables on the \( v^{\textit{th}} \) variable. Equation~\ref{Equation 14} integrates both direct and indirect contributions to better identify variables most responsible for anomalies. 

{
Let the fused attribution for variable $v$ be $s_v(\omega)=\omega\,s_v^{K}+(1-\omega)\,\gamma\,s_v^{G}$ with $\gamma=\tfrac{1}{c}\sum_{j=1}^c s_j^{K}$. Collecting terms in 
$\omega$ yields $s_v(\omega)=a_v\,\omega+b_v$ with $a_v:=s_v^{K}-\gamma s_v^{G}$ and $b_v:=\gamma s_v^{G}$.}

{\begin{lemma}[Bounded controllability] 
Given $s_v(\omega)=a_v\,\omega+b_v$ and let $L:=\|a\|_\infty=\max_{v}|a_v|$. Then for any $v$ and any $\omega, \omega'\in[0,1]$, $\big|s_v(\omega')-s_v(\omega)\big|\ \le\ |a_v|\,|\omega'-\omega|\ \le\ L\, |\omega'-\omega|.$ Hence each $s_v$ is $L$-Lipschitz in $\omega$, ensuring bounded sensitivity and controllability with respect to the weighting parameter.
\label{lemma1}
\end{lemma}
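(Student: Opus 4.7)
The plan is to exploit the affine structure of $s_v$ in the weighting parameter $\omega$, then upgrade the resulting pointwise identity into the claimed uniform Lipschitz estimate. Concretely, the argument reduces to linearity, the triangle/multiplicativity of the absolute value, and the definition of the $\ell_\infty$ norm, so no nontrivial analytic machinery is required.

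First, I would record that in the decomposition $s_v(\omega) = a_v\,\omega + b_v$ with $a_v := s_v^{K} - \gamma\,s_v^{G}$ and $b_v := \gamma\,s_v^{G}$, the coefficients depend only on the kernel and gradient SHAP outputs $s_v^{K},\,s_v^{G}$ and on $\gamma = \tfrac{1}{c}\sum_{j=1}^{c} s_j^{K}$; crucially, none of these involve $\omega$. Hence, for each fixed $v$, the map $\omega \mapsto s_v(\omega)$ is affine on $[0,1]$.

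Second, for any $\omega, \omega' \in [0,1]$, I would subtract the two evaluations to cancel the constant term:
\[
s_v(\omega') - s_v(\omega) \;=\; a_v\,(\omega' - \omega).
\]
Taking absolute values and using $|a_v(\omega'-\omega)| = |a_v|\,|\omega'-\omega|$ gives the first (in fact sharp) inequality $|s_v(\omega') - s_v(\omega)| \le |a_v|\,|\omega'-\omega|$. For the second inequality, I would invoke the definition $L = \|a\|_\infty = \max_v |a_v|$, which yields $|a_v| \le L$ for every $v$, and multiplying by the nonnegative quantity $|\omega'-\omega|$ preserves the inequality. Chaining the two bounds produces the displayed estimate, and the $L$-Lipschitz conclusion is then just a verbatim restatement of this bound in the Lipschitz formalism with constant $L$ independent of $v$.

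The main obstacle, such as it is, is purely bookkeeping: one must verify that $a_v$ and $b_v$ are genuinely $\omega$-free, which is immediate because $\gamma$ is defined from the kernel SHAP values $\{s_j^{K}\}$ alone in the paragraph preceding the lemma. Once this independence is confirmed, the subtraction collapses to a single linear term and the rest is mechanical. I would close by emphasizing that the common Lipschitz constant $L$ provides the uniform sensitivity bound asserted by the lemma, so that tuning $\omega$ cannot perturb any individual attribution $s_v$ by more than $L\,|\Delta\omega|$.
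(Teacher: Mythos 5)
Your proposal is correct and follows essentially the same route as the paper's proof: subtract the two affine evaluations to cancel $b_v$, take absolute values to get $|a_v|\,|\omega'-\omega|$, and bound $|a_v|$ by $L=\|a\|_\infty$. The only cosmetic difference is that the paper phrases the difference via a perturbation $\delta=\omega'-\omega$ before restating it for general $\omega,\omega'$, while you work with $\omega,\omega'$ directly; your explicit check that $a_v$ and $b_v$ are $\omega$-free is a harmless extra remark.
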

\vspace{-5mm}
\begin{proof} 
For any perturbation $\delta$ with $\omega+\delta\in[0,1]$, since $s_v(\omega)=a_v\,\omega+b_v$, we have
\begin{equation}
    \begin{aligned}
        s_v(\omega+\delta)-s_v(\omega)&=a_v\,\delta
\quad\\
\Rightarrow\big|s_v(\omega+\delta)-s_v(\omega)\big|&=|a_v|\,|\delta|\le \underbrace{\max_{v}|a_v|}_{=\,\|a\|_\infty=L}\,|\delta|.
    \end{aligned}
\end{equation}
Equivalently, for any $\omega,\omega'\in[0,1]$,
\begin{equation}
    \big|s_v(\omega')-s_v(\omega)\big|=\big|a_v(\omega'-\omega)\big|\le L\,|\omega'-\omega|.
\end{equation}
Hence, each $s_v$ is $L$-Lipschitz in $\omega$, establishing bounded controllability. 
\end{proof}}

{As $L = max_v |a_v^{K}|$ and $a_v = s_v^{K} - \gamma s_v^{G}$, we can get that $L \le max_v|s_v^K| + \gamma \cdot max_v|s_v^G|$. In binary classification, it is known that $\max_v |s_v^{K}|=1$, and for anomalous data the MV-MTF values are very small, which leads to $|s_v^{G}|\approx 0$. Therefore, we can derive that $L \le 1$. Hence, for any $\omega, \omega'$, it holds that $|s_v(\omega') - s_v(\omega)| \le |\omega' - \omega|$. This indicates that as $\omega$ varies, the contribution of variable $v$ is constrained by a uniform upper bound, and its ranking is therefore stable.}

{Given the ranked contribution list $C(\omega) = \{s_{i}(\omega) \}_{i=1}^n$, where $n$ is the number of variables and $C(\omega)$ is  in descending order of $s_i(\omega)$, the Top-$k$ set is defined as $k$ variables with highest $s_i(\omega)$, i..e, $\mathrm{Top}\text{-}k(\omega) = \{s_{1}(\omega), \dots, s_{k}(\omega)\}$.}

{\begin{lemma}[Top-$k$ ranking plateau]  For any two variables $u$ and $v$, the pairwise contribution difference is defined as $\Delta_{uv}(\omega) := s_u(\omega) - s_v(\omega) = A_{uv}\omega + B_{uv}$, where $A_{uv} = a_u - a_v$ and $B_{uv} = b_u - b_v$. 
Then, with $S := \max_{u,v}|A_{uv}|$, the Top-$k$ set remains locally invariant within radius $|\delta| = (s_{k}(\omega) - s_{k+1}(\omega))/S$, i.e., $
\mathrm{Top}\text{-}k(\omega+\delta) \equiv \mathrm{Top}\text{-}k(\omega).$
\label{lemma2}
\end{lemma}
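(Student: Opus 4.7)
The plan is to reduce Top-$k$ invariance to the preservation of sign for every cross-boundary pairwise contrast $\Delta_{uv}(\omega)$ with $u \in \mathrm{Top}\text{-}k(\omega)$ and $v \notin \mathrm{Top}\text{-}k(\omega)$. Since each $\Delta_{uv}$ is affine in $\omega$ with slope $A_{uv}$, a uniform Lipschitz bound $S = \max_{u,v}|A_{uv}|$ converts the problem into a minimum-gap argument concentrated at the boundary pair $(k, k+1)$, mirroring in spirit the Lipschitz argument already used for Lemma~\ref{lemma1}.

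First, I would make the set-level reformulation precise: $\mathrm{Top}\text{-}k(\omega+\delta) \equiv \mathrm{Top}\text{-}k(\omega)$ holds if and only if, for every $u \in \mathrm{Top}\text{-}k(\omega)$ and every $v \notin \mathrm{Top}\text{-}k(\omega)$, one has $\Delta_{uv}(\omega+\delta) \geq 0$; any reshuffling strictly inside the top-$k$ (or strictly outside it) leaves the set invariant, so only cross-boundary pairs matter. Next, I would expand the affine perturbation, $\Delta_{uv}(\omega+\delta) = \Delta_{uv}(\omega) + A_{uv}\delta$, and apply $|A_{uv}\delta| \leq S|\delta|$ to conclude $\Delta_{uv}(\omega+\delta) \geq \Delta_{uv}(\omega) - S|\delta|$. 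Then I would observe that by the descending ordering of $\{s_i(\omega)\}$, every cross-boundary contrast satisfies $\Delta_{uv}(\omega) \geq s_k(\omega) - s_{k+1}(\omega)$, with the boundary pair $(k,k+1)$ attaining this minimum. Substituting the radius $|\delta| \leq (s_k(\omega) - s_{k+1}(\omega))/S$ yields $\Delta_{uv}(\omega+\delta) \geq 0$ uniformly over all cross-boundary pairs, which preserves the Top-$k$ set.

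The main obstacle is the tie case $s_k(\omega) = s_{k+1}(\omega)$, where the allowed radius collapses to zero and the conclusion becomes vacuous; this deserves an explicit remark but imposes no real difficulty. A secondary subtlety is strict versus weak inequality at the boundary: for a strict preservation of every pairwise order one would need $|\delta| < (s_k(\omega) - s_{k+1}(\omega))/S$, whereas for set-level invariance the weak form suffices, because $\mathrm{Top}\text{-}k$ ignores internal reorderings among ties that may emerge exactly at the extreme of the radius. The crux of the proof is therefore the single observation that, among all cross-boundary pairs, the tightest linear constraint on $\delta$ is imposed precisely by the boundary gap $s_k(\omega) - s_{k+1}(\omega)$, which is the quantity appearing in the statement.
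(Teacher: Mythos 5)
Your proposal is correct and follows essentially the same route as the paper's proof: expand the affine contrast $\Delta_{uv}(\omega+\delta)=\Delta_{uv}(\omega)+A_{uv}\delta$, bound the perturbation uniformly by $S|\delta|$, and use the boundary gap $s_k(\omega)-s_{k+1}(\omega)$ as the minimal cross-boundary margin. Your explicit handling of the tie case $s_k(\omega)=s_{k+1}(\omega)$ and of the strict-versus-weak inequality at the radius is a small but genuine improvement in rigor over the paper's version, which states the radius with equality but actually requires $|\delta|<(s_k(\omega)-s_{k+1}(\omega))/S$.
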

\begin{proof} For any $u\in\mathrm{Top}\text{-}k$ and $v\notin\mathrm{Top}\text{-}k$, given the offset value $\delta$ of $\omega$, then 
\begin{equation}
    \begin{aligned}
        \Delta_{uv}(\omega+\delta) &= A_{uv}(\omega+ \delta) + B_{uv} \\
        &=\Delta_{uv}(\omega)+A_{uv}\,\delta\ \\
        &\ge\ \Delta_{uv}(\omega)-|A_{uv}|\,|\delta|\  \\
        &\ge\ (s_{k}(\omega) - s_{k+1}(\omega)) -|A_{uv}|\,|\delta|.
    \end{aligned}
\end{equation}
Only when $\Delta_{uv}(\omega+\delta) > 0$, we have $\mathrm{Top}\text{-}k(\omega+\delta)=\mathrm{Top}\text{-}k(\omega)$. To ensure this condition simultaneously across all variable pairs, we set $S := \max_{u,v}|A_{uv}|,$ which leads to the sufficient condition $|\delta| < (s_{k}(\omega) - s_{k+1}(\omega))/S$. Here, $s_{k}(\omega) - s_{k+1}(\omega)$ represents the minimal margin between Top-$k$ and non-Top-$k$ variables. Under this condition, the Top-$k$ set remains unchanged. \end{proof}}

{$S:=\max_{u,v}|A_{uv}|$ is the same for all $\omega$. Therefore, $\delta$ is determined by $s_{k}(\omega) - s_{k+1}(\omega)$. From $s_v(\omega)=\omega\,s_v^{K}+(1-\omega)\,\gamma\,s_v^{G},$ we can find that increasing $\omega$ raises the contribution of the raw data $s^K$, while decreasing $\omega$ raises the contribution of the transition MV-MTF data $\gamma s^G$. Under anomalies, the values of the transition structure MV-MTF tend to decrease (due to the low probability of anomalous transitions), which in turn drives the contribution $s^G$ close to zero, thereby leading to a very small $s_{k}(\omega) - s_{k+1}(\omega)$ value. The value of $s_{k}(\omega) - s_{k+1}(\omega)$ increases only when $\omega$ grows, which means the weight of the raw data contribution is increased. According to $|\delta| < \tfrac{s_{k}(\omega) - s_{k+1}(\omega)}{S}$, it follows that as $s_{k}(\omega) - s_{k+1}(\omega)$ increases, $\Delta$ also grows, thereby widening the plateau region. }


This weighting scheme effectively balances the contributions of raw numerical features and MV-MTF modalities, ensuring anomaly detection primarily relies on reliable raw data while moderately integrating the complex structural information captured by MV-MTF.}

Based on Equation~\ref{Equation 14}, we rank variables by their contribution to anomalies. Variables with higher \( s_v \)  contribute more, while those with lower \( s_v \)  have little or no contribution. Identifying the contributing variables is essential. To achieve this, we propose a binary search strategy to efficiently determine a critical point \( k \), defined as follows:

\begin{definition} [\textbf{Critical point}]\label{def:critical_point}
    Given a list of variables \( \mathcal{L} \) ranked by their contribution to anomalies, \( k \) is a critical point such that the first \( k \) variables in \( \mathcal{L} \) contribute to the anomalies, while the remaining variables do not. \( \mathcal{L}_{o, o'} \) denotes a sublist of \( \mathcal{L} \) from \( \mathcal{L}_{o} \) to \( \mathcal{L}_{o'} \), where \( \mathcal{L}_{o} \) is the \( o^{\textit{th}} \) variable in \( \mathcal{L} \).
\end{definition}

\begin{myEx}
For an anomaly time series with eight variables, we first calculate \( s_v \) (\( 1 \leq v \leq 8 \)) using Equation 10 and rank the variables by \( s_v \) to obtain \( \mathcal{L} = \{2, 4, 5, 1, 8, 7, 6, 3\} \). This ranking indicates that variable 2 is ranked first, variable 4 second, and so on.

Next, we apply the binary search strategy. First, we replace the first four variables \( \mathcal{L}_{1,4} = \{2, 4, 5, 1\} \)—the first half of \( \mathcal{L} \)—with normal sequences and check whether the anomaly remains. Two cases arise:  
(i) If the anomaly remains, \( 4 \leq k \leq 8 \), and we replace \( \mathcal{L}_{4, 5} \)—the first half of \( \mathcal{L}_{4, 8} \)—with normal sequences.  
(ii) If the anomaly is resolved, \( 1 \leq k \leq 4 \), and we replace \( \mathcal{L}_{1, 2} \)—the first half of \( \mathcal{L}_{1, 4} \)—with normal sequences.  This process is repeated until \( k \) is determined, identifying the first \( k \) variables as those contributing to the anomalies.
\end{myEx}

During anomaly analysis, anomalous data is clustered and evaluated with expert knowledge to determine the nature of each anomaly. For example, in the SMD dataset, anomalies are categorized into the three types: (i) network failure, (ii) hardware damage, and (iii) software service anomaly. New anomalies can be classified into these predefined types and further interpreted in detail.

Finally, by combining the results of anomaly evaluation and analysis, an anomaly report is generated. This report not only provides detailed explanations of anomaly detection results but also enhances the accuracy and actionability of the analysis through evaluation and classification mechanisms. It offers users deeper insights and robust support for decision-making.


\section{Experiments} 

\subsection{Experimental Setting}
\noindent\textbf{Datasets.} We evaluate \textsc{Moon} on six real-world MTS datasets.
\begin{itemize}
[topsep=1pt,itemsep=1pt,parsep=1pt, partopsep=1pt]
    \item  Soil Moisture Active Passive (SMAP)~\cite{SMAP}: It consists of soil samples and telemetry data collected by NASA's Mars rover.
    \item  Mars Science Laboratory (MSL)~\cite{MSL}: It is similar to SMAP but corresponds to the sensor and actuator data for the Mars rover itself.
    \item  Secure Water Treatment (SWaT)~\cite{SWaT}: It is collected from a real-world water treatment plant with seven days of normal and four days of abnormal operation. This dataset consists of sensor values (water level, flow rate, etc.) and actuator operations (valves and pumps).
    \item Water Distribution dataset (WADI)~\cite{SWaT}: It is an extended dataset of SWAT. WADI features 14 days of normal operation KPIs and two days of attack scenario KPIs.
    \item  Pooled Server Metrics (PSM)~\cite{PSM}: It is collected from multiple eBay application server nodes, anonymized for publication. It includes 26 features related to server machine metrics like CPU utilization and memory, with some localization meta-attributes omitted. The training set spans 13 weeks, followed by eight weeks for testing.
    \item Server Machine Dataset (SMD)~\cite{SMD}: It is a five-week long dataset of stacked traces of the resource utilizations of 28 machines from a compute cluster. Similar to MSL, we use the nontrivial sequences, specifically the traces for machine-2-1, machine-2-6, and machine-3-7.
\end{itemize}

The datasets used are publicly available and can be accessed at the following URLs: SMAP\footnote{\protect\url{https://nsidc.org/data/smap/data}}, MSL\protect\footnote{\url{https://pds-atmospheres.nmsu.edu/data_and_services/atmospheres_data/Mars/Mars.html}}, SWaT\footnote{\protect\url{https://itrust.sutd.edu.sg/itrust-labs_datasets/dataset_info/\#swat}}, WADI\footnote{\protect\url{https://itrust.sutd.edu.sg/itrust-labs_datasets/dataset_info/\#wadi}}, PSM\footnote{\protect\url{https://github.com/eBay/RANSynCoders/tree/main/data}}, and SMD\footnote{\protect\url{https://github.com/NetManAIOps/OmniAnomaly/tree/master/ServerMachineDataset}}. We summarize the key characteristics of the datasets in Table~\ref{dataset}.  ``\#Entities'' refers to the number of distinct time series, and ``\#Dimensions'' refers to the number of dimensions in each dataset. 
Each dataset includes a training set without anomaly detection labels and a testing set with labels. It is important to note that since our task is supervised, we only use the test portion of the datasets. For our experiments, we use 6,000 of testing set for training for the WADI dataset and 10,000 for the other five datasets. All datasets use the remaining 5,000 labeled samples for testing. 

\begin{table}[t]
\centering
\caption{Dataset Statistics}
\footnotesize
\setlength{\tabcolsep}{2.5mm}
\begin{tabular}{c|cccc}
\hline
DataSet & \#Train   & \#Test   & \#Entities & \#Dimensions \\ \hline
WADI    & 1048571 & 172801 & 1        & 123        \\
MSL     & 58317   & 73729  & 3        & 55         \\
PSM     & 132482  & 87842  & 1        & 26         \\
SMAP    & 58317   & 73729  & 55       & 25         \\
SWaT    & 496800  & 449919 & 1        & 51         \\
SMD     & 708405  & 708420 & 28       & 38         \\ \hline 
\end{tabular}
\label{dataset}
\end{table}

\begin{table*}[t]
\centering
\caption{{Comparison results}}  \small
\setlength{\tabcolsep}{1.8mm}
\begin{tabular}{ccccccccccccc}
\hline
\multirow{2}{*}{Method} & \multicolumn{4}{c}{WADI}                                              & \multicolumn{4}{c}{MSL}                                               & \multicolumn{4}{c}{PSM}                                               \\ \cline{2-13} 
                        & Precision       & Recall          & F1              & F1PA            & Precision       & Recall          & F1              & F1PA            & Precision       & Recall          & F1              & F1PA            \\ \hline
TranAD                  & 0.9460          & 0.9992          & 0.9719          & 0.9742          & 0.8649          & 0.9831          & 0.9202          & 0.9914          & 0.9494          & 0.9999          & 0.9494          & 0.9784          \\
OmniAnomaly             & 0.8549          & 0.9999          & 0.9218          & 0.9478          & 0.7848          & \textbf{0.9999} & 0.8794          & 0.9917          & 0.8816          & 0.9990          & 0.8985          & 0.9522          \\
MAD\_GAN                & 0.9422          & 0.9596          & 0.9702          & 0.9702          & 0.8516          & \textbf{0.9999} & 0.9198          & 0.9801          & 0.8725          & 0.9968          & 0.9287          & 0.9417          \\
LSTM\_AD                & 0.8953          & \textbf{0.9999} & 0.9447          & 0.9686          & 0.7948          & \textbf{0.9999} & 0.9023          & 0.9683          & 0.9038          & 0.9999          & 0.9494          & 0.9814          \\
TimesNet                & 0.7973          & 0.9900          & 0.8833          & 0.9948          & 0.9735          & 0.9715          & 0.9725          & 0.9855          & 0.9850          & 1.0000          & 0.9924          & 1.0000          \\
ADtransformer           & 0.7926          & 0.9620          & 0.8692          & 0.9804          & \textbf{0.9802}          & 0.9842          & 0.9822          & 0.9920          & 0.9908          & 0.9878          & 0.9893          & 0.9939          \\
DCdetector              & 0.7961          & 0.9144          & 0.8512          & 0.9551          & 0.9739          & 0.9589          & 0.9664          & 0.9790          & 0.9809          & 0.8910          & 0.9338          & 0.9358          \\
CATCH                   & 0.7968          & 0.9524          & 0.8677          & 0.9754          & 0.9773   & 0.8651          & 0.9178  & 0.9277          & 0.9815           & 0.9745           & 0.9780           & 0.9780           \\
HyperRocket             & -               & -               & -               & -               & 0.9771          & 0.9355          & 0.9558          & 0.9691          & 0.9820          & 1.0000          & 0.9909          & 0.9909          \\
Rocket                  & 0.6923          & 0.8735          & 0.9945          & 0.9945          & 0.9773          & 0.9378          & 0.9571          & 0.9774          & 0.9850          & 1.0000          & 0.9909          & 0.9909          \\
Moon                    & \textbf{1.0000} & 0.9912          & \textbf{0.9956} & \textbf{0.9956} & 0.9772          & 0.9961          & \textbf{0.9866}          & \textbf{0.9980} & \textbf{0.9869} & \textbf{1.0000} & \textbf{0.9934} & \textbf{1.0000} \\ \hline
\multirow{2}{*}{Method} & \multicolumn{4}{c}{SMAP}                                              & \multicolumn{4}{c}{SWaT}                                              & \multicolumn{4}{c}{SMD}                                               \\ \cline{2-13} 
                        & Precision       & Recall          & F1              & F1PA            & Precision       & Recall          & F1              & F1PA            & Precision       & Recall          & F1              & F1PA            \\ \hline
TranAD                  & 0.9133          & 0.9965          & 0.9531          & 0.9706          & \textbf{0.9977}          & 0.6878          & 0.8143          & 0.9773          & 0.9072          & 0.9973          & 0.9501          & 0.9981          \\
OmniAnomaly             & 0.7991          & 0.9989          & 0.8883          & 0.9619          & 0.9760          & 0.6956          & 0.8123          & 0.9769          & 0.9881          & 0.9985          & 0.9932          & 0.9983          \\
MAD\_GAN                & 0.8157          & 0.9999          & 0.8984          & 0.9634          & 0.9593          & 0.6956          & 0.8064          & 0.9770          & \textbf{0.9967} & 0.9980          & \textbf{0.9973} & 0.9982          \\
LSTM\_AD                & 0.8139          & 0.9999          & 0.8974          & 0.9702          & 0.9977          & 0.6878          & 0.8143          & 0.9318          & 0.9069          & 0.9973          & 0.9499          & 0.9695          \\
TimesNet                & 0.9107          & 0.9993          & 0.9530          & 0.9997          & 0.8072          & 0.9988          & 0.8928          & 0.9994          & 0.9918          & 0.9989          & 0.9953          & \textbf{0.9994} \\
ADtransformer           & 0.9103          & 0.9939          & 0.9502          & 0.9969          & 0.8074          & \textbf{1.0000}          & \textbf{0.8934}          & \textbf{1.0000}          & 0.9930          & 0.9951          & 0.9940          & 0.9944          \\
DCdetector              & 0.9107          & 0.9405          & 0.9253          & 0.9693          & 0.8075          & 0.9978          & 0.8926          & 0.9989          & 0.9918          & 0.8554          & 0.9185          & 0.9219          \\
CATCH                   & 0.9109          & 0.9785          & 0.9435          & 0.9891          & 0.8077    & 0.9762           & 0.8840   & 0.9880           & 0.9916           & 0.9357             & 0.9628            & 0.9661           \\
HyperRocket             & 0.9107          & 0.9989          & 0.9528          & 0.9961          & 0.7379          & 0.0907          & 0.1615          & 0.1663          & 0.9933          & 0.9311          & 0.9612          & 0.9612          \\
Rocket                  & 0.9106          & 1.0000          & 0.9532          & 0.9532          & 0.6936          & 0.4985          & 0.5801          & 0.6653          & 0.9936          & 0.9652          & 0.9792          & 0.9957          \\
Moon                    & \textbf{0.9561}          & \textbf{1.0000} & \textbf{0.9776}          & \textbf{1.0000} & 0.8111          & 0.9787          & 0.8870          & 0.9892          & 0.9914          & \textbf{0.9999} & 0.9956          & 0.9956          \\ \hline

\end{tabular}
\label{results}
\end{table*}

\noindent\textbf{Baselines and setting.} We compare \textsc{Moon} with six state-of-the-art reconstruction-based methods: \textsc{TranAD}~\cite{tranAD}, \textsc{OmniAnomaly}~\cite{OmniAnomaly}, \textsc{MAD\_GAN}~\cite{MAD-GAN}, \textsc{LSTM\_AD}~\cite{DLSTMAD}, and \textsc{CATCH}~\cite{CATCH} and two classification-based methods: \textsc{HyperRocket}~\cite{hydra-MultiRocket23} and \textsc{Rocket}~\cite{Rocket20,MultiRocket22}. We choose the two types of methods as baselines because (i) prediction-based methods are theoretically similar to reconstruction-based methods, with the latter being more mainstream, and (ii) \textsc{Moon} is a classification-based method. Due to the space limitation, please refer to Section \ref{Related Work} for more details about baselines. {In our experiments, the hyperparameters are configured as follows: the coefficient $\alpha$ in Equation \ref{Equation 7} is set to 0.9, the learning rate is set to 0.001, and the weight $\omega$ in Equation \ref{Equation 14} is fixed at 0.6.
}

\noindent\textbf{Performance metrics.} We evaluate anomaly detection performance using Precision, Recall, F1 score, and $\text{F1}_\text{PA}$ score.  
$\text{F1}_\text{PA}$ score is a segment-based F1 score using point adjustment (PA), where a segment is considered abnormal if at least one point within it is detected as abnormal~\cite{PAK,PA}.  $\text{F1}_{\text{PA}} = 2 \cdot \frac{\text{Precision}_{\text{PA}} \cdot \text{Recall}_{\text{PA}}}{\text{Precision}_{\text{PA}} + \text{Recall}_{\text{PA}}}$, where $\text{Precision}_{\text{PA}}$ and $\text{Recall}_{\text{PA}}$ are the precision and recall calculated using the PA method.

We evaluate the interpretability of anomaly detection using HitP\%, which measures the proportion of true anomalous dimensions included among the top candidates predicted by the model~\cite{HitH}. Here, P\% is the percentage of true anomalous dimensions at each timestamp, defining the number of top predicted candidates considered.

\noindent\textbf{Environment.}
All methods are executed on a machine with an Intel(R) Core(TM) i9-10900K CPU, featuring 10 cores and a 3.70GHz clock speed. The framework is also equipped with an NVIDIA GeForce RTX 3090 graphics card, which has 24GB of video memory. The source code of \textsc{Moon} are available\footnote{https://github.com/Syh517/Moon/tree/master}.


\begin{figure}[t]
      \centering
      \includegraphics[width=\linewidth]{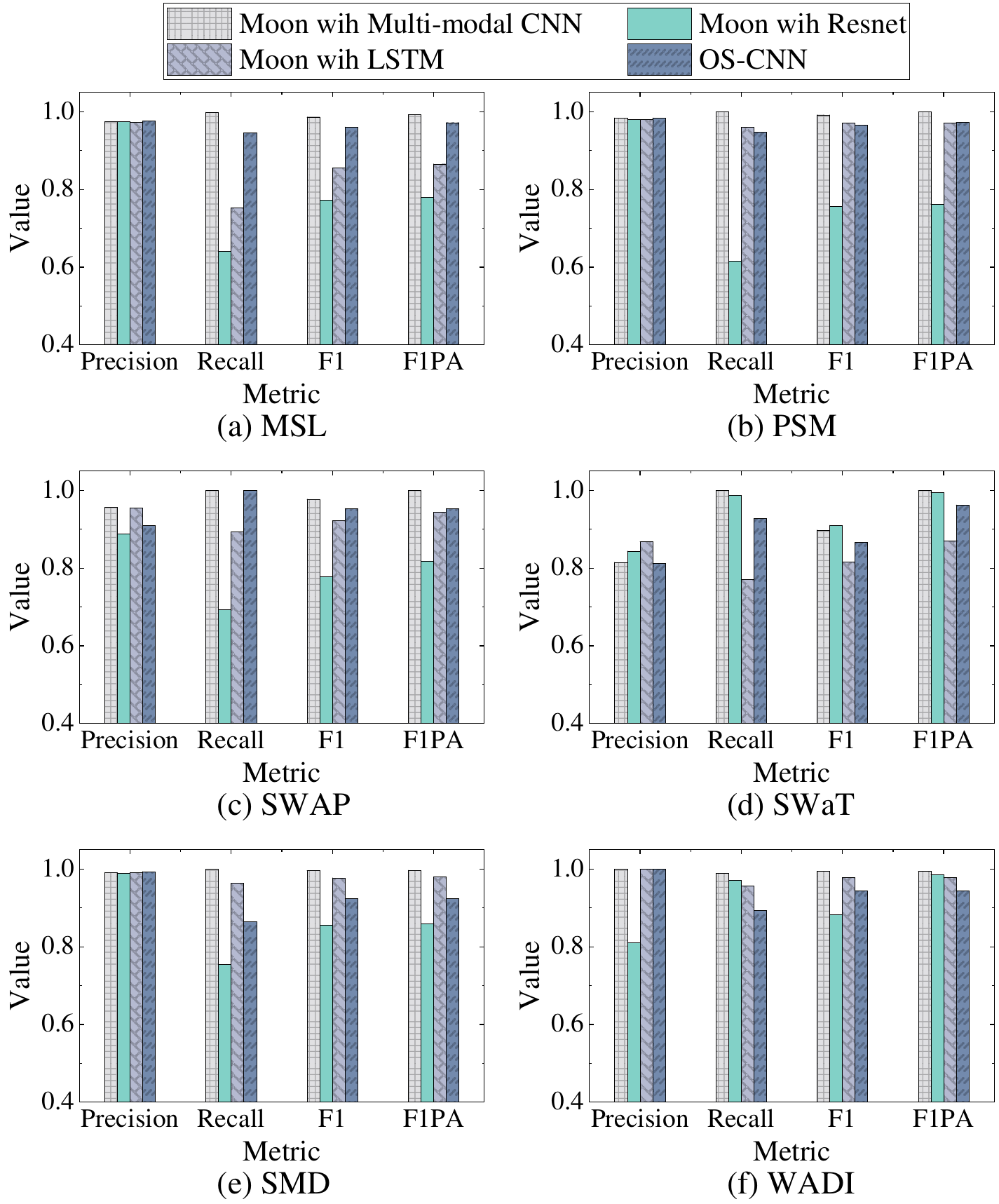}
      \caption{Ablation Study on Multimodal-CNN of Moon} 
      \label{figureablation_cnn}
      \vspace{-3mm}
\end{figure}

\begin{figure}[th]
      \centering
      \includegraphics[width=\linewidth]{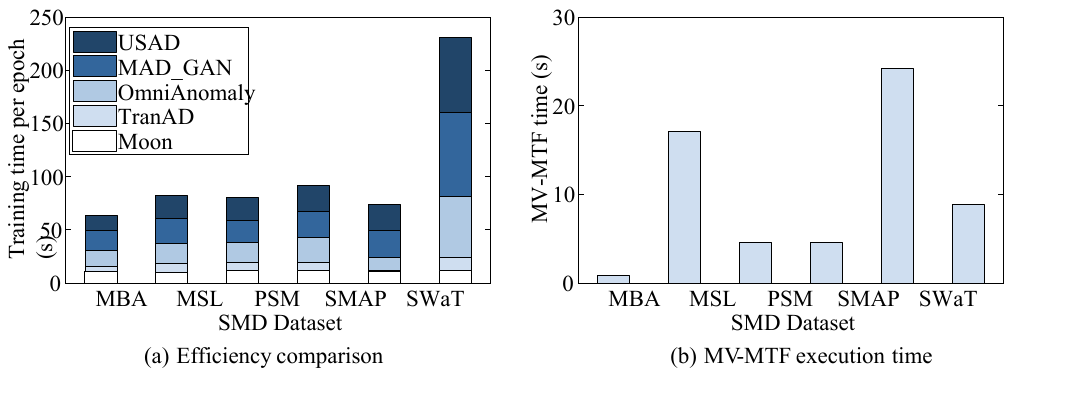}
      \vspace{-6mm}
      \caption{Parameter study}       
      \label{figureefficiency}
      \vspace{-6mm}
\end{figure}

\subsection{Comparison Study}
We compare \textsc{Moon} with reconstruction-based methods (\textsc{TranAD}, \textsc{OmniAnomaly}, MAD\_GAN, LSTM\_AD) and classification-based methods (\textsc{HyperRocket} and \textsc{Rocket}) in six datasets. Table~\ref{results} presents the results, highlighting the best performance in bold.

\noindent\textbf{Compared with reconstruction-based methods.} \textsc{Moon} achieves the highest or near-highest Recall, F1 and $\text{F1}_{\text{PA}}$ on most datasets, with Precision values almost reaching 1 across all datasets. In contrast, reconstruction-based methods such as \textsc{TranAD}, \textsc{OmniAnomaly}, \textsc{MAD\_GAN}, \textsc{CATCH}, and \textsc{ LSTM\_AD} often have lower F1 and $\text{F1}_{\text{PA}}$ on complex datasets, particularly on the SMAP and SWaT datasets where the F1 are below 0.9. This is because the Moon method improves the ability to identify anomalies in complex multivariate time-series data through modal transformation and deeper feature extraction and fusion techniques.  Meanwhile, reconstruction-based methods rely on reconstruction error, which is effective for simple patterns. However, they struggle with multivariate, high-dimensional, and complex dependencies because they cannot fully capture anomaly characteristics, resulting in lower detection performance.


Furthermore, although many reconstruction-based methods generally excel in Recall, detecting most anomalies and thus maintaining a low miss rate, they struggle with Precision, with higher false positive rates. For instance, \textsc{OmniAnomaly} has a Precision of only 0.7991 on the SMAP dataset, and \textsc{MAD\_GAN} has a Precision of only 0.8725 on the PSM dataset, both significantly lower than  \textsc{Moon}. This indicates that these methods  have difficulties distinguishing between normal and anomalous data, possibly due to reconstruction error inadequately capturing anomaly characteristics. In contrast, the comprehensive feature extraction and classification techniques of \textsc{Moon} enable accurate anomaly detection across various datasets. In addition, {\textsc{Moon} outperforms \textsc{CATCH} in most cases. This is because \textsc{CATCH} transforms time series into the frequency domain, capturing frequency features but potentially losing temporal structure, making it less effective for anomalies that rely on fine-grained temporal context. However, \textsc{Moon} uses raw and enhanced anomaly data to better capture details, achieving more accurate detection.}

\noindent\textbf{Compared with classification-based methods.} \textsc{Moon} outperforms the classification-based methods in F1-Score on most datasets in all performance metrics. However, classification-based methods often show a clear disparity between Precision and Recall. For instance, in the SMAP dataset, \textsc{HyperRocket} has a Precision of only 0.9107, while its Recall reaches 1.0. This result indicates that \textsc{HyperRocket} is highly sensitive to anomalies but lacks the ability to accurately distinguish between normal and anomalous data. This occurs because the Rocket method assigns random kernels to each variable, disregarding the correlations between variables. In contrast, Moon enhances the ability to identify anomalies in complex MTS data by capturing local information between variables through modality conversion, showcasing excellent overall performance. {Additionally, HyperRocket performs slightly worse than Rocket. This is because HyperRocket emphasizes global modeling and cross-variable dynamics, which benefits sequence-level classification~\cite{TSec} but may overlook subtle point-wise anomalies. In contrast, Rocket uses random convolutional kernels to extract local features, making it more effective for detecting anomalies at individual time points.} 

Classification-based methods generally achieve higher F1-Score than reconstruction-based methods on most datasets. This is due to classification methods not relying on anomaly scores,  allowing them to more effectively capture abnormal patterns in the data.


\subsection{Training Efficiency Study}
To evaluate the efficiency, we compare the running time per epoch of \textsc{Moon} with that of reconstruction-based methods. \textsc{HyperRocket} and \textsc{Rocket}, which use predefined feature extraction instead of deep neural networks, do not require training. Hence, their epoch running times cannot be measured and are excluded from the comparison.

The results shown in the {Fig.}~\ref{figureefficiency}(a) indicate that \textsc{Moon}'s running time per epoch is comparable to that of \textsc{TranAD} and significantly lower than that of other methods, especially on SMD datasets.
This result stems primarily from two factors. First, \textsc{Moon}’s efficiency is primarily due to its use of parameter sharing, which reduces redundant computations, and its CNN-based feature extraction module, which accelerates the feature extraction process and further enhances overall efficiency. Second, \textsc{TranAD} is a lightweight method which contributes to its fast performance. In contrast, other reconstruction-based methods process all sequential windows, making them more computationally intensive.   

{Fig.}~\ref{figureefficiency}(b) presents the execution time of MV-MTF technology in \textsc{Moon} across all six datasets.  The results demonstrate that the execution time of MV-MTF increases with the number of variables. For instance, it takes only a few seconds when the number of variables is less than 40, as seen in PSM (26 variables), SMAP (25 variables), and SMD (38 variables). However, the execution time increases significantly when the number of variables exceeds 50, as demonstrated by the MSL (55 variables), SWaT (51 variables) and WADI (123 variables) datasets. However, MV-MTF transformation is performed once during the entire training process. We have optimized its time complexity to \(O(n)\) (see Section 4.1), making its time consumption negligible compared to overall training time. In real-time applications, reconstruction- and prediction-based methods compare with original data, while Moon uses modal conversion. Both have \(O(n)\) complexity, proportional to window size and number of variables, resulting in minimal efficiency differences. 


\begin{figure}[t]
      \centering
      \includegraphics[width=0.7\linewidth]{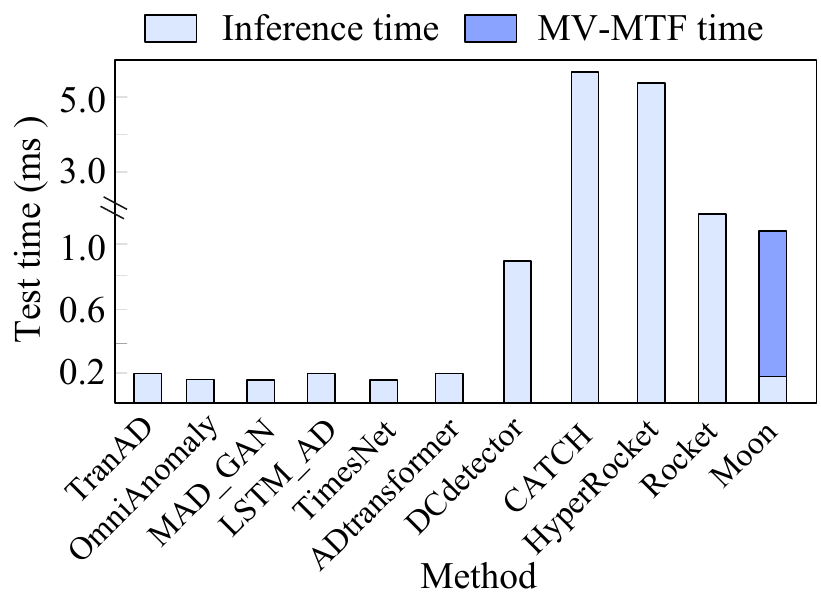}
      \vspace{-3mm}
      \caption{The test time (ms) of each time series data point}
      \vspace{-4mm}
      \label{figureonline-efficiency}
     
\end{figure}

\subsection{Inference Efficiency Study}
{To demonstrate the practical feasibility of our approach in real-time streaming settings, we further report the online testing time, as shown in {Fig.}~\ref{figureonline-efficiency}. It is important to note that our method is not designed for real-time anomaly detection in streaming settings. Therefore, we first calculate the total test time, then divide by the number of samples to get the inference time per TS point. Experimental results show that the anomaly detection time (i.e., 0.2ms) of our classification-based model is comparable to that of existing unsupervised methods. Although the MV-MTF transformation introduces an additional 1.5 ms per time series data point, the overall time remains more efficient than traditional supervised classifiers Rocket and HyperRocket, as our model avoids complex high-dimensional computations during inference.}

\begin{table}[t]
\centering
\caption{Ablation study on modality conversion, feature fusion, and parameter sharing of \textsc{Moon}} \footnotesize
\setlength{\tabcolsep}{1.7mm}
\begin{tabular}{lcccc}
\hline
\multicolumn{1}{c}{\multirow{2}{*}{Method}} & \multicolumn{4}{c}{PSM}                                                             \\ \cline{2-5} 
\multicolumn{1}{c}{}                        & \multicolumn{1}{c}{Precision} & \multicolumn{1}{c}{Recall} & \multicolumn{1}{c}{F1} & \multicolumn{1}{c}{$\text{F1}_\text{PA}$}\\ \hline
Moon                                        & \textbf{0.9869}                         & \textbf{1.0000}                      & \textbf{0.9934}   & \textbf{1.0000}            \\
w/o modality conversion     &  	                  &                         &                  &               \\
---- {only raw data}     &  0.9829	                  & 0.9470	                        & 0.9652                   & 0.9723               \\
---- {only MV-MTF data}     &  0.9819	                  & 0.9979	                        & 0.9898                   & 0.9898               \\
---- {replace MV-MTF with GAF}     &  0.9790		                & 0.8584	                        & 0.9148                   & 0.9148             \\
w/o feature fusion           & 0.9817                        & 0.9837                     & 0.9827         & 0.9858        \\
w/o    parameter sharing      		                & 0.9812                        & 0.9613                     & 0.9711     & 0.9711    \\
w/o    {MultiModel attention}      		                & 0.9806                       & 0.8268                     & 0.8972     & 0.9022       

\\ \hline
\multicolumn{1}{c}{\multirow{2}{*}{Method}} & \multicolumn{3}{c}{SMD}                                                             \\ \cline{2-5}
\multicolumn{1}{c}{}                        & \multicolumn{1}{c}{Precision} & \multicolumn{1}{c}{Recall} & \multicolumn{1}{c}{F1} & \multicolumn{1}{c}{$\text{F1}_\text{PA}$} \\ \hline
Moon                                        & \textbf{0.9914}                        & \textbf{0.9999}                     & \textbf{0.9956}      & \textbf{0.9956}             \\
w/o modality conversion       	               &                         &                      &        &            \\
---- {only raw data}     &  0.9829	                  & 0.9470	                        & 0.9652                   & 0.9723               \\
---- {only MV-MTF data}     &  0.9905	                  & 0.9968	                        & 0.9942                   & 0.9965               \\
---- {replace MV-MTF with GAF}     &  0.9910		                & 0.7568	                        & 0.8597                 & 0.8788             \\
w/o feature fusion        & 0.9912                        & 	0.6131	                   & 0.7578           & 0.7602        \\
w/o parameter sharing                 & 0.9916                        & 0.7822                     & 0.8745         & 0.9956  \\
w/o    {MultiModel attention}      		                & 0.9932                        & 0.7554                     & 0.8582     & 0.8582             

\\ \hline
\end{tabular}
\label{ablation}
\end{table}

\subsection{Ablation Study}

\noindent \textbf{Ablation study on multimodal-CNN.} We evaluate the effectiveness of the proposed Multimodal-CNN by comparing it to ResNet, LSTM, and CNN models across four metrics: Precision, Recall, F1, and $\text{F1}_\text{PA}$, on six datasets. {Fig.}~\ref{figureablation_cnn} presents the comparative results, demonstrating that Multimodal-CNN consistently outperforms both ResNet, LSTM and CNN across all evaluation metrics, with especially notable improvements over ResNet. This arises from Multimodal-CNN's use of convolutional kernels with varying sizes, enabling it to capture receptive fields at different scales and model both local and global features effectively. In contrast, ResNet's fixed receptive field limits its ability to fully capture the range of data features.

Furthermore, Multimodal-CNN demonstrates a clear advantage over LSTM and CNN in handling high-dimensional MTS data, such as MSL with 55 variables and WADI with 123 variables. This advantage stems from its ability to effectively capture complex inter-variable relationships using a MV-MTF and to enhance key information extraction through a cross-modal attention mechanism. These features enable Multimodal-CNN to better integrate multimodal information and model dependencies between variables. In contrast, LSTM and CNN lack the capability to effectively capture inter-variable dependencies. As a result, they often overlook interactions and relationships between variables, leading to the loss of critical information.

\noindent \textbf{Ablation study on four key components.} We evaluate the effectiveness of three key components—modality conversion, feature fusion, parameter sharing, and multimodal attention—through ablation experiments on the PSM and SMD datasets. Table~\ref{ablation} presents the results, with bold values indicating the best performance. 

As observed, omitting any component leads to a performance drop, particularly in Recall, F1, and \(\text{F1}_\text{PA}\). On the PSM dataset, removing the feature fusion module reduces Recall, F1, and \(\text{F1}_\text{PA}\) by 0.0530, 0.0282, and 0.0277, respectively. This is because simple data concatenation fails to capture the relationships between modalities, resulting in decreased performance. On the SMD dataset, omitting modality transformation (i.e., only raw data) causes a significant drop in Recall, F1, and \(\text{F1}_\text{PA}\), decreasing by 0.1349, 0.0688, and 0.0690, respectively. This highlights the critical role of modality transformation in capturing local inter-variable relationships and improving performance. {Only the MV-MTF data or replacing MV-MTF with Gramian Angular Fields (GAF) method still lead to a decline in performance, particularly on the PSM dataset. This is because MV-MTF is specifically designed to capture temporal dependencies and inter-variable relationships in time series. This enables more effective modeling of complex dynamic behaviors and significantly enhances the distinguishability between normal and anomalous patterns. In contrast, GAF primarily relies on static spatial or temporal mappings and struggles to uncover such rich structural information.} In addition, the absence of parameter sharing leads to a notable performance decline, emphasizing its importance in integrating information across modalities to enhance feature extraction and overall model performance. {Finally, the ablation of the multimodal attention module results in a substantial performance degradation, with Recall, F1, and \(\text{F1}_\text{PA}\) dropping to 0.2444, 0.1374, and 0.1374, respectively. This highlights the pivotal role of multimodal attention in capturing cross-modal correlations and complementary information, thereby improving the model's ability to detect abnormal patterns.
}

\begin{figure}[t]
      \centering
      \includegraphics[width=\linewidth]{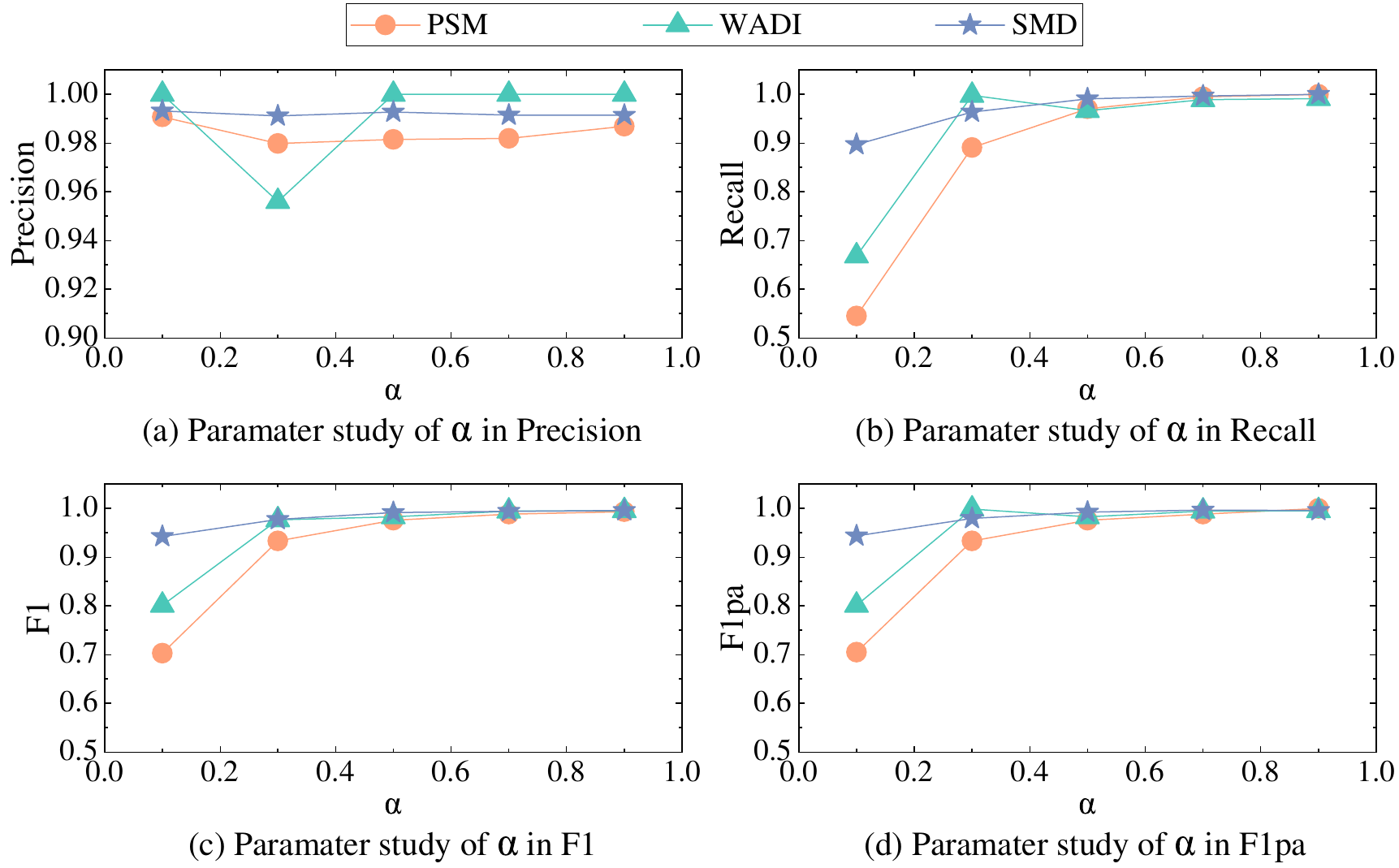}
      \vspace{-3mm}
      \caption{Parameter study of $\alpha$} 
      \label{figureparameter_study}
      \vspace{-3mm}
\end{figure}

\begin{figure}[t]
      \centering
      \includegraphics[width=\linewidth]{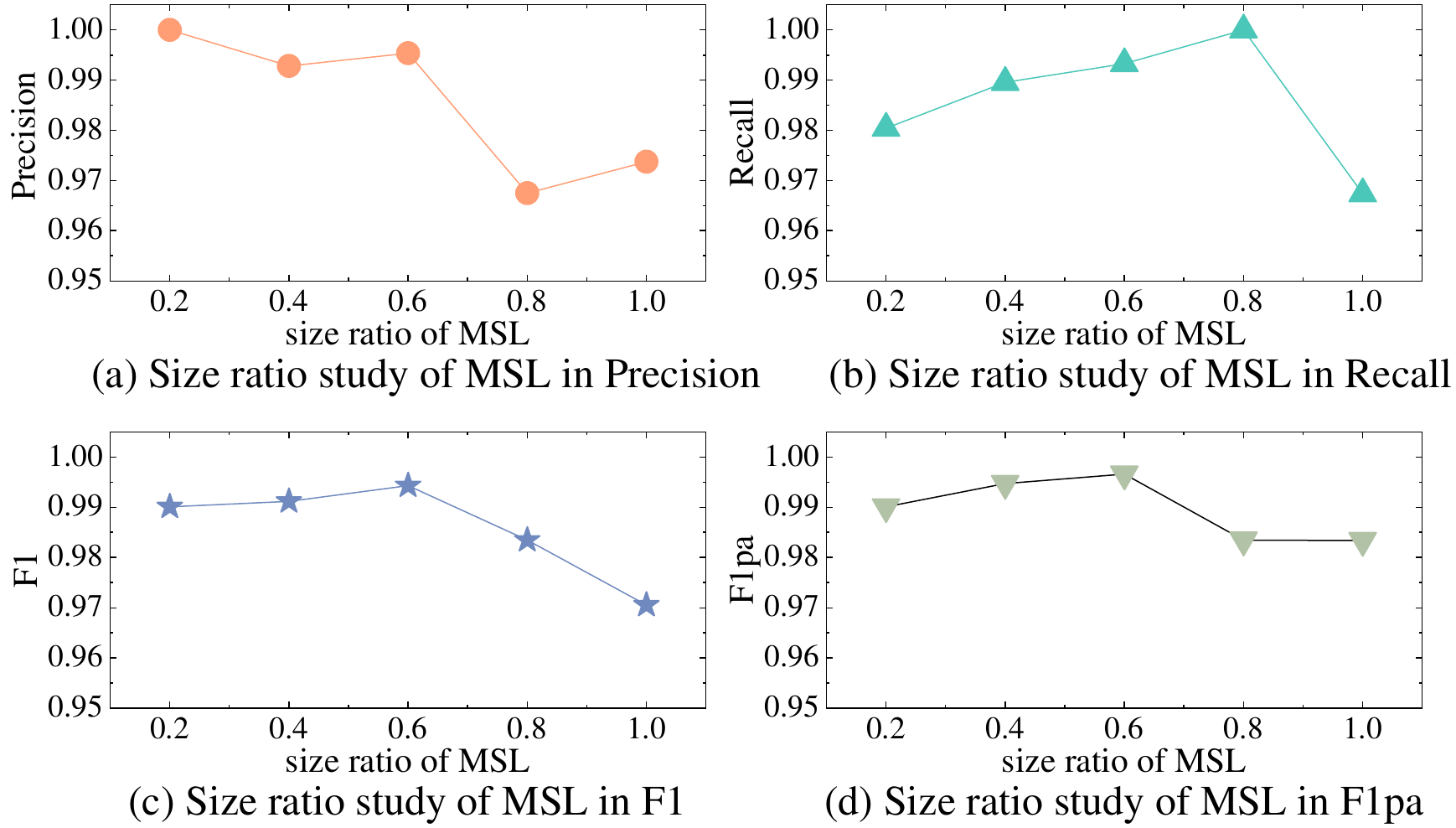}
      \vspace{-3mm}
      \caption{Parameter study on historical time-step length in performance} 
      \label{figurews}
      \vspace{-3mm}
\end{figure}

\begin{figure}[t]
      \centering
      \includegraphics[width=0.6\linewidth]{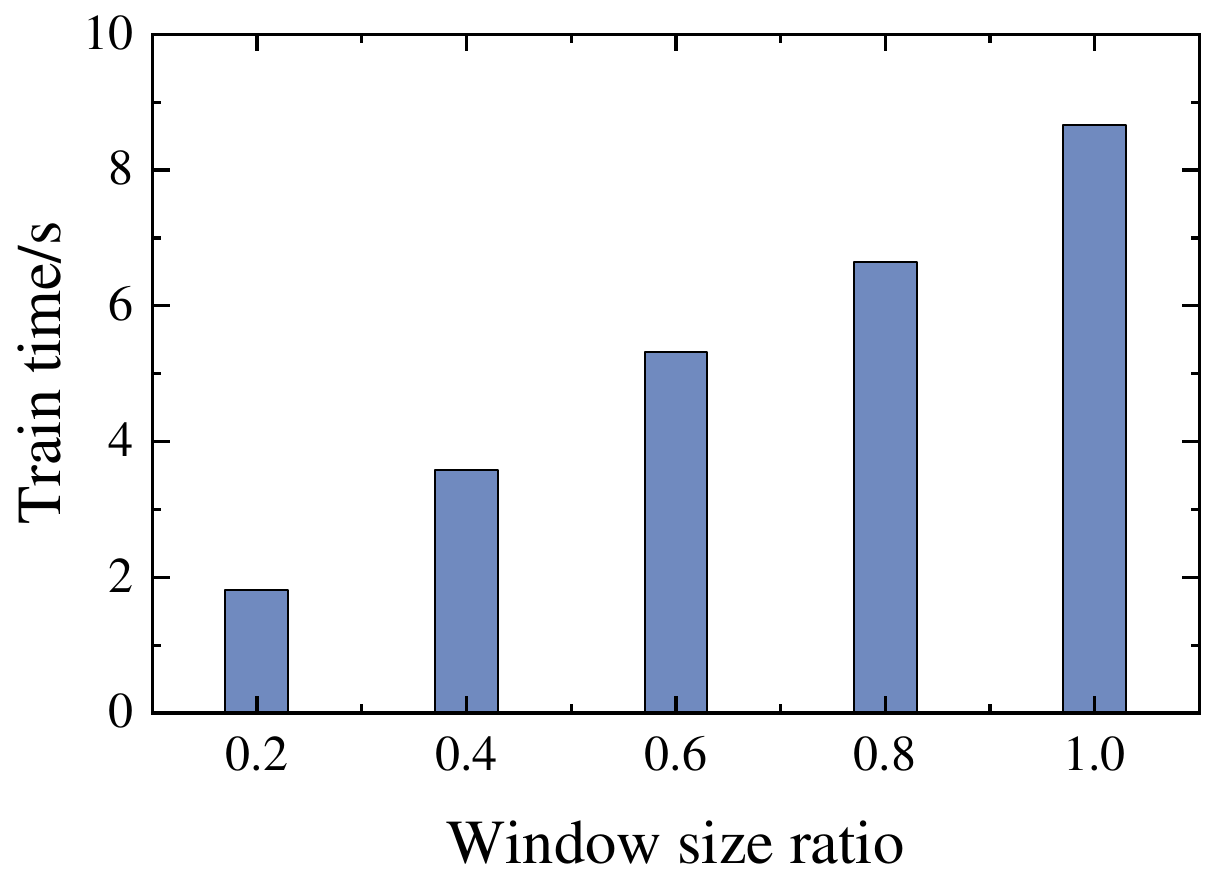}
      \vspace{-3mm}
      \caption{{The efficiency of different window size} } 
      \label{figurews_efficiency}
      \vspace{-7mm}
\end{figure}

\subsection{Parameter Study}
\textbf{Parameter study on $\alpha$.} We conduct a parameter study on the weight parameter $\alpha$ (cf. Equation 5), using the PSM, SMAP, and SMD datasets. As shown in {Fig.}~\ref{figureparameter_study}, $\alpha$ exhibits a similar trend. When $\alpha$ is set to 0.1, the performance is notably poor because this setting reduces the transition probabilities between timestamps, leading to insufficient information capture from individual time dimensions and consequently, lower performance. As $\alpha$ increases, performance generally improves. However, the optimal $\alpha$ value for achieving the best performance varies across datasets due to differences in variable dependencies. 

\textbf{Parameter study on historical window sizes.} {In addition, we conduct a sensitivity analysis on different historical window sizes on MSL dataset to evaluate the robustness of our method. The results presented in {Fig.}~\ref{figurews} demonstrate that the model achieves optimal performance—in terms of bothF1 and $F1_\text{PA}$ scores—when the window size is set to 0.6. This is because a smaller window may fail to capture the distinction between normal and abnormal patterns, while a larger window tends to obscure critical local anomaly signals. From an efficiency perspective, smaller window sizes result in faster computation, as illustrated in {Fig.}~\ref{figurews_efficiency}. This is because both the transformation overhead of MV-MTF and the overall model complexity increase with the growth of window size.}

\begin{figure*}
    \centering
    \includegraphics[width=0.85\linewidth]{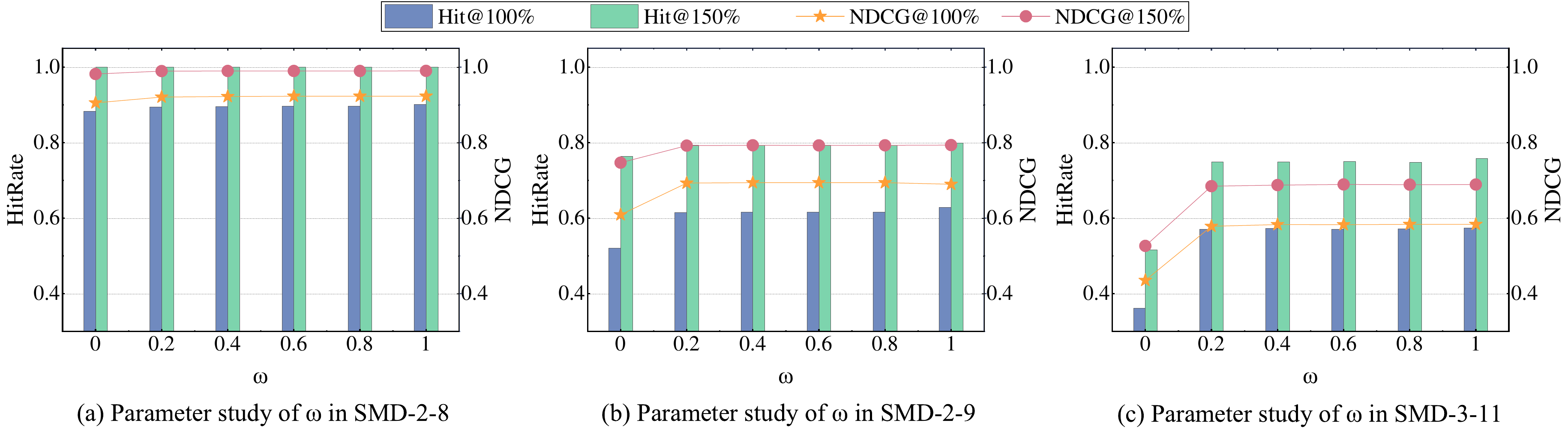}
    \caption{{The weight parameter balancing in the interpretability contribution equation}}
    \label{fig:interpretability-w}
    \vspace{-4mm}
\end{figure*}

{\textbf{Parameter study of $\omega$.}
We perform a parameter study of the fusion weight $\omega$ in Eq.~\ref{Equation 14} over $[0,1]$ on three SMD subsets stratified by interpretability ground-truth length—long (SMD-2-8), medium (SMD-2-9), and short (SMD-3-11). The results in Fig.~\ref{fig:interpretability-w} yield two consistent findings, aligned with Lemmas~\ref{lemma1} and~\ref{lemma2}: (i) performance varies slightly with $\omega$, fluctuations occur mainly for $\omega \in [0, 0.4]$; and (ii) there exists a broad \emph{plateau} (i.e., $\omega \in [0.6, 1.0]$) on which the variable-interpretability ranking remains essentially unchanged. These patterns persist across all three length regimes, indicating robustness both to the choice of $\omega$ and to the ground-truth duration. Therefore, we fix $\omega=0.8$ as a robust default.
}

\begin{table}[t]
\centering
\caption{{Anomaly interpretation performance}} \footnotesize
\setlength{\tabcolsep}{0.7mm}
\begin{tabular}{@{}ccccc@{}}
\toprule
Method      & Hit@100\% & Hit@150\% & NDCG@100\% & NDCG@150\% \\ \midrule
TranAD      & 0.4905   & 0.6634   & 0.5167     & 0.6105     \\
OmniAnomaly & 0.4873        & 0.6119 & 0.5076     & 0.5831     \\
MAD\_GAN    & 0.4607       & 0.6598   & 0.4158     & 0.5345     \\
LSTM\_AD    & 0.4761        & 0.6419   & 0.4474     & 0.5622     \\
Moon        & \textbf{0.5209}        & \textbf{0.7034}        & \textbf{0.5294}     & \textbf{0.6283}     \\ \bottomrule
\end{tabular}
\label{figureinterpretation}
\vspace{-4mm}
\end{table}

\subsection{Performance of Anomaly Interpretation}

We evaluate the anomaly interpretability by using \(\text{Hit@P\%} = \frac{\text{Hit@P\%}}{|GT_{t}|}\), where \( GT_{t} \) is the ground truth array of dimensions that contribute to the anomaly and \( |GT_{t}| \) is its length. Here, Hit@P\%~\cite{OmniAnomaly} represents the number of ground truth dimensions in the top \( P\% \times |GT_{t}| \) of the \( AS \) list, where \( P \) is set to 100 and 150. For example, for an anomaly in a time series with 5 variables, where the ground truth \( GT_t \) is \{2, 3\} and the \( AS \) list generated by \textsc{Moon} is \{2, 1, 3, 5, 4\}, the Hit@100\% is 50\% since only one of the top two variables in \( AS \) matches \( GT_t \). The Hit@150\% is 100\%, as both variables in \( GT_t \) are among the top three in the \( AS \) list. {We also add a new metric the Normalized Discounted Cumulative Gain (NDCG)~\cite{NDCG}, which measures the ranking quality of anomaly interpretation. Similar to HitRate@P\%, NDCG@P\% considers the top $P\% \times |GT_t|$ variables, but it further emphasizes the positions of the ground truth dimensions in the ranking. 
We report the average score over all the samples for each metric, with higher values indicating better interpretability.} We  evaluate \textsc{Moon} exclusively on the SMD dataset, as other datasets lack interpretation labels. 

As shown in Table~\ref{figureinterpretation}, \textsc{Moon} achieves superior anomaly interpretation accuracy, with a Hit@100\% of 0.5294 and a Hit@150\% of 0.7034. In contrast, the state-of-the-art reconstruction-based model \textsc{TranAD} performs significantly worse on the same dataset, with a Hit@100\% of 0.4905 and a Hit@150\% of 0.6634. Reconstruction-based methods such as \textsc{TranAD} rely on thresholding to identify anomalous variables, making their performance highly dependent on detection accuracy and often overlooking the model's influence on individual variables. In contrast, \textsc{Moon} focuses on explaining the model's internal mechanisms rather than merely interpreting its outputs. {Moreover, Moon achieves superior performance on the NDCG@P\% metric, as it emphasizes ranking order rather than simply detecting whether a target is hit. By integrating a scoring mechanism to filter out unreliable explanations, Moon provides more precise interpretations, thereby enhancing both the interpretability and reliability of anomaly detection.}

\begin{figure*}[t]
      \centering
      \includegraphics[width=0.9\linewidth]{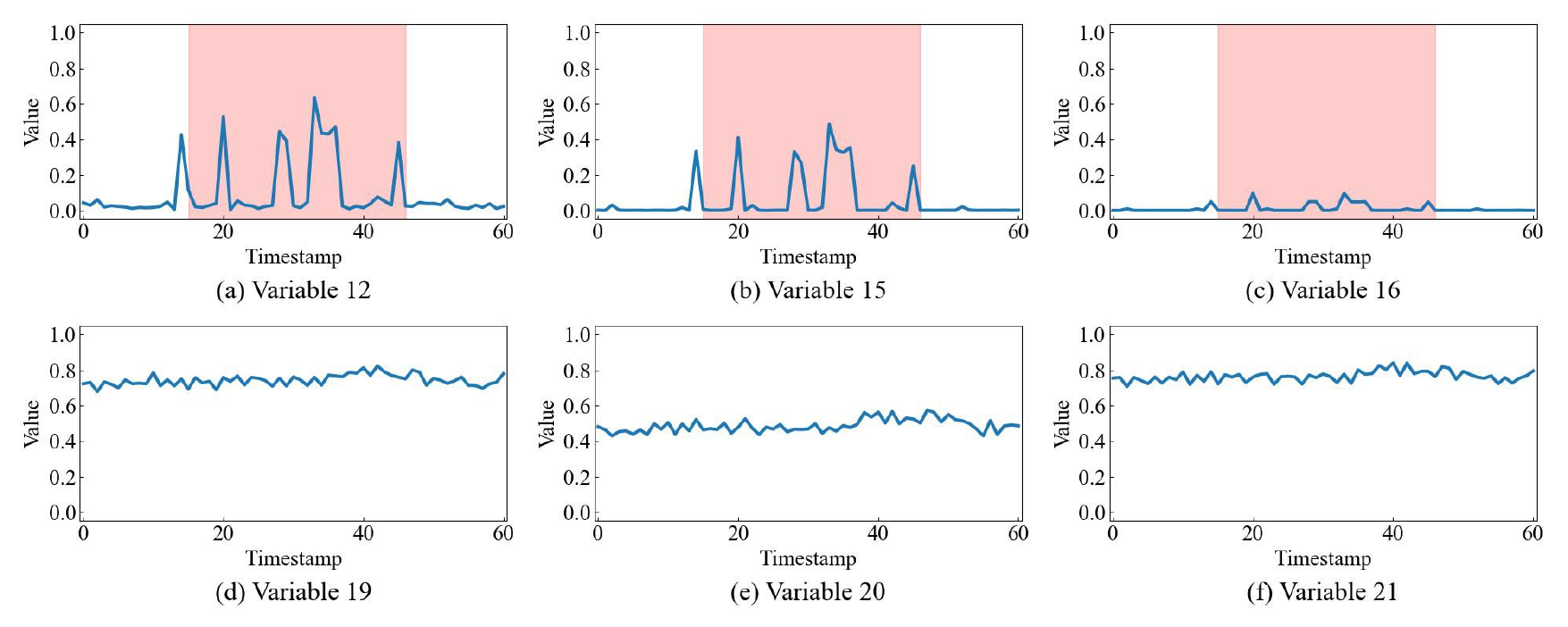}
       \vspace{-4mm}
      \caption{{Time series plots of anomalous variables (12, 15, 16) and partial normal variables (19, 20, 21) in time interval [370, 430] of the SMD 1-4 dataset}}
      \label{figurecase_study_vs}
      \vspace{-3mm}
\end{figure*}

\begin{figure}[t]
      \centering
      \includegraphics[width=\linewidth]{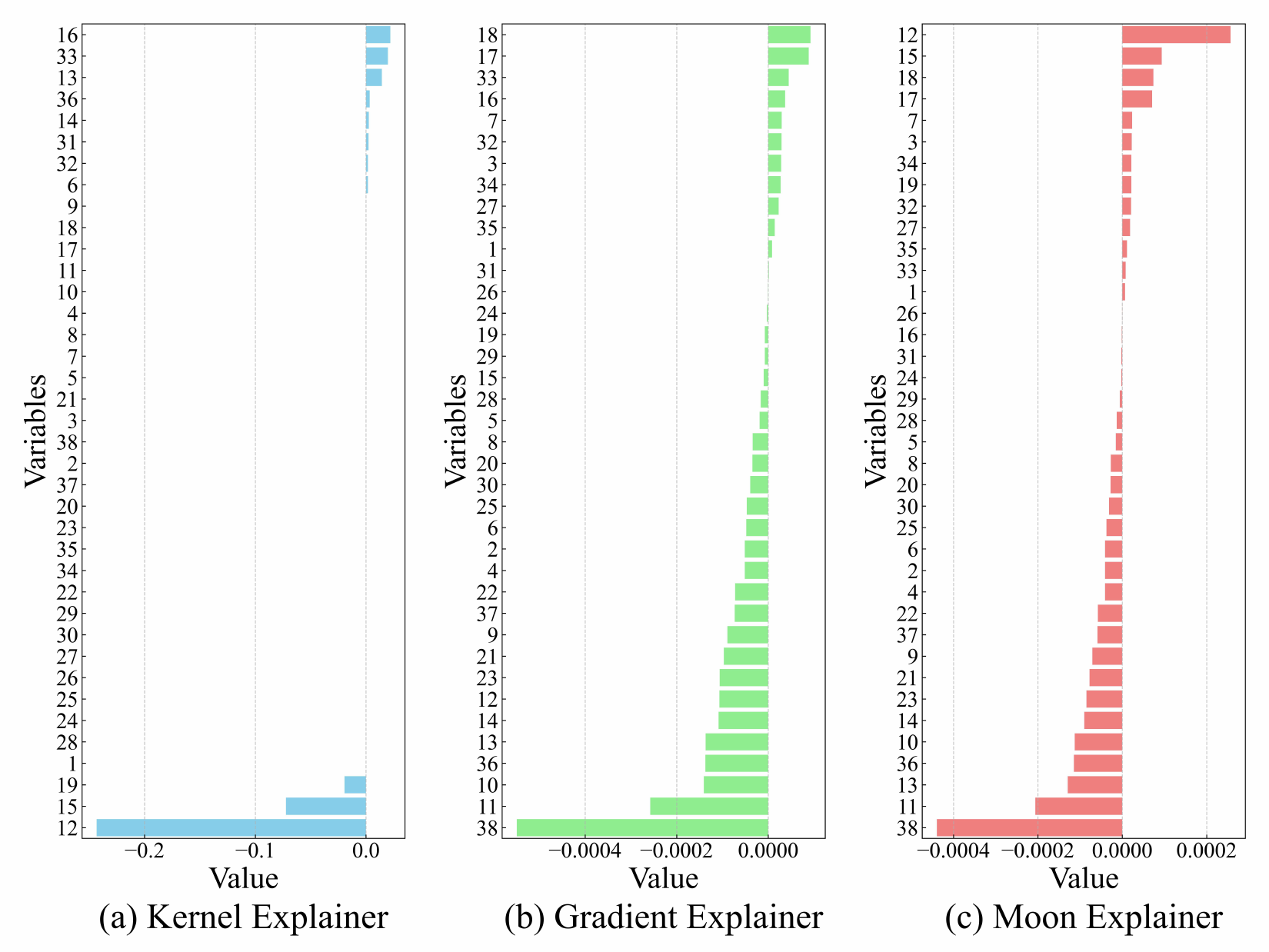}
       \vspace{-3mm}
      \caption{{The relative contribution of variables to the anomalies} }
      \label{figureexplainer}
      \vspace{-3mm}
\end{figure}

\subsection{Case study}
{To validate the performance of our method in real-world scenarios, we conducte a case study using the SMD dataset.
We selected the SMD 1–4 subset and focus our analysis on the time interval $[370, 430]$. Within this interval, the dataset labels the period $[385, 417]$ as anomalous. It is important to note that in SMD, anomaly labels are assigned at the time-point level: if any variable is anomalous at a given timestamp, all variables at this timestamp are marked as anomalous. However, based on the dataset’s provided interpretable labels, we  confirm that the true anomalous variables during this period are variables 12, 15, and 16, corresponding to disk\_svc, disk\_wa, and disk\_wb, which represent the average service time per disk request, the I/O wait time, and the write-back time, respectively. As shown in {Fig.}~\ref{figurecase_study_vs}, the top three subplots display the time-series curves of these anomalous variables, with red shaded regions indicating the actual anomalous intervals. For comparison, we also include several variables (e.g., 19, 20, 21) that remained normal during the same period.}

{Interpretable anomaly detection must not only identify anomalous timestamps but also accurately locate the responsible variables. Once an anomaly is detected, we compute the contribution of each variable by combining the MV-MTF representation with the original TS data. To assess the performance of Moon's explainer, we visualize the contribution results in {Fig.}~\ref{figureexplainer}. As observed, Moon's explainer correctly identifies 2 out of the 3 ground-truth anomalous variables within its top-3 predictions, yielding a Hit@100\% score of 0.667. Finally, engineers can leverage the identified anomalous variables to perform root cause analysis and promptly address system issues. For example, Moon's explainer highlights variables 12 and 15 as anomalous, we can investigate potential disk bottlenecks by examining which processes are heavily utilizing disk I/O and take corresponding optimizations.}


\section{Related Work}
\label{Related Work}

\noindent\textbf{Modality Conversion.} Time series data can be transformed into time-frequency representations using various modal conversion methods, such as Short-Time Fourier Transform (STFT)~\cite{STFT}, Wavelet Transform~\cite{WT}, and Hilbert-Huang Transform (HHT)~\cite{HHT}. Each method has distinct characteristics: STFT uses a fixed window scale, which cannot adjust based on frequency; WT is better suited for non-stationary signals; and HHT offers adaptability but with lower resolution.

Moreover, time series data can be encoded into images and feature extraction and classification can be performed using Convolutional Neural Networks (CNN). Common techniques include Gramian Angular Field (GAF)~\cite{GAF-MTF}, Recurrence Plots (RP)~\cite{RP}, Markov Transition Field (MTF)~\cite{GAF-MTF}, Difference Field~\cite{DF}, and Relative Position Matrix~\cite{relative}. GAF converts time series data into cosine and sine Gramian matrices, preserving angular information and capturing dynamic features. RP generates two-dimensional matrices reflecting similarity structures, making it suitable for identifying repeating patterns and periodicity. MTF creates Markov transition matrices, generating images that reflect state transitions and capture dynamic evolution. DF highlights data trend changes by producing images through differential operations. RPM maps time series data to high-dimensional spaces to generate images reflecting relative positional relationships. Among these, MTF is particularly effective for time series data with strong contextual links, as it captures dynamic characteristics and evolutionary trends. However, current research on multivariate MTF techniques remains limited.

To address this gap, this paper proposes MV-MTF  that transforms MTS with contextual links into two-dimensional images. MV-MTF preserves transition relationships between variables over time, effectively enhancing anomaly detection.

\noindent\textbf{Deep Learning Anomaly Detection.}
Deep learning models~\cite{GAF-MTF,USAD, OmiScale-CNN, autoencoder,tranAD} have demonstrated exceptional effectiveness in anomaly detection tasks for complex and large-scale datasets due to their ability to learn intricate data representations and features.

OmniAnomaly~\cite{OmniAnomaly} utilizes a stochastic recurrent neural network combined with a planar normalizing flow to generate reconstruction probabilities. It integrates an improved Peak Over Threshold (POT) method for automated anomaly threshold selection, significantly enhancing performance. However, its training process is lengthy and resource-intensive. MAD-GAN~\cite{MAD-GAN} employs an LSTM-based GAN model to capture time-series distributions through its generator, incorporating both prediction error and discriminator loss into anomaly scoring. This method boosts anomaly detection performance but can also be computationally demanding.

USAD~\cite{USAD} is more efficient than the above methods because it utilizes a simple autoencoder with dual decoders trained in an adversarial framework. This lightweight design significantly reduces training time while maintaining competitive performance.  TranAD~\cite{tranAD} incorporates focus score-based self-conditioning for robust multimodal feature extraction, adversarial training for stability, and Model-Agnostic Meta-Learning (MAML) for effective training on limited data.

CNNs have proven highly effective in extracting and learning features from data, enabling them to capture complex structures. Omni-Scale CNN~\cite{OmiScale-CNN} is particularly adept at handling multi-scale MTS data through its multi-scale feature extraction capabilities. However, it does not support multimodal data, limiting its applicability to more diverse datasets.

To address this limitation, we enhance Omni-Scale CNN by incorporating parameter sharing and a feature fusion module, creating a multimodal anomaly detection model. This enhanced model significantly improves its ability to process and analyze multimodal data, further advancing its effectiveness in anomaly detection.

\noindent\textbf{Explainable Anomaly Detection.}  
Explainable Anomaly Detection (XAD) extracts insights from anomaly detection models, emphasizing data relationships to help users understand the causes of anomalies, thereby enhancing interpretability and usability. In time-series data, XAD methods can be classified into sample-based and model-based methods. 

Sample-based methods explain anomalies by comparing anomalous and normal objects, focusing on local neighborhoods, counterexamples, or contextual anomalies. For example, reconstruction-based methods such as OmniAnomaly~\cite{OmniAnomaly} and TranAD~\cite{tranAD} identify anomalous variables by comparing reconstructed values with original ones.

Model-based methods explain the internal workings or predictions of anomaly detection models. They can be
further divided into white-box and black-box methods. White-box methods are inherently interpretable, while black-box methods use post-hoc techniques, often model-agnostic, to explain predictions~\cite{model-Interpret18, model-Interpret19}. For example, black-box methods such as SHAP values~\cite{model-Interpret17, model-Interpret20} provide insights into feature importance within the model. We integrate sample-based and model-based methods by leveraging SHAP values to enhance interpretability and using normal samples to define the range of anomalous variables.
 
\section{Conclusion}
This paper introduces \textsc{Moon}, an efficient and effective framework for multivariate time series anomaly detection. \textsc{Moon} utilizes the MV-MTF technology to provide more detailed multi-modal information via mode conversion, while introduces Multi-Model-OSCNN  to  effectively
learns and integrates information from different modalities. Additionally, \textsc{Moon} provides user-friendly anomaly interpretability by using SHAP values to rank variables in ascending order of their impact on anomalies and select the top-ranked ones. 
Extensive experiments evaluate the performance of Moon and validate the effectiveness of each component/technique in \textsc{Moon}. compared with existing state-of-the-arts methods, \textsc{Moon} achieves high efficient and accurate anomaly detection and provides  the detail anomaly analysis report for good interpretability. In the future, it is of interest to extend our anomaly detection model to handle other downstream tasks.


\bibliographystyle{abbrv}
\bibliography{reference}

\end{document}